\def\eqref#1{equation~\ref{#1}}
\def\1{\bm{1}}
\DeclareMathAlphabet{\mathsfit}{\encodingdefault}{\sfdefault}{m}{sl}
\SetMathAlphabet{\mathsfit}{bold}{\encodingdefault}{\sfdefault}{bx}{n}
\DeclareMathOperator*{\argmin}{arg\,min}
\def\BibTeX{{\rm B\kern-.05em{\sc i\kern-.025em b}\kern-.08em
    T\kern-.1667em\lower.7ex\hbox{E}\kern-.125emX}}
\setlist[enumerate]{nosep}
\newcommand{\subfloat}{\subfigure}
\newcommand{\T}[1]{\boldsymbol{\mathscr{#1}}}
\newcommand*{\QEDB}{\hfill\ensuremath{\Box}}%
\newtheorem{theorem}{Theorem}
\newcommand{\method}{\textsc{Falcon}\xspace}
\newcommand{\tucker}{{Tucker}\xspace}
\newcommand{\cp}{{CP}\xspace}
\newcommand{\hide}[1]{}
\newcommand{\gep}{{GEP}\xspace}
\newtheorem{definition}{Definition}
\newcommand{\printfnsymbol}[1]{%
  \textsuperscript{\@fnsymbol{#1}}%
}
\title{\method: Lightweight and Accurate Convolution}
\author{Jun-Gi Jang$^{1}$\thanks{These authors contributed equally.}, Chun Quan$^{2}$\printfnsymbol{1}, Hyun Dong Lee$^{3}$, U Kang$^{1}$  \\
1 Department of Computer Science and Engineering, Seoul National University \\
2 CCB Fintech \\
3 Department of Computer Science, Columbia University 
}
\begin{document}

\maketitle

\begin{abstract}
	How can we efficiently compress Convolutional Neural Network (CNN) while retaining their accuracy on classification tasks?
Depthwise Separable Convolution (DSConv), which replaces a standard convolution with a depthwise convolution and a pointwise convolution, has been used for building lightweight architectures.
However, previous works based on depthwise separable convolution are limited when compressing a trained CNN model since
1) they are mostly heuristic approaches without a precise understanding of their relations to standard convolution, and
2) their accuracies do not match that of the standard convolution.

In this paper, we propose \method, an accurate and lightweight method to compress CNN.
\method uses GEP, our proposed mathematical formulation to approximate the standard convolution kernel, to interpret existing convolution methods based on depthwise separable convolution.
By exploiting the knowledge of a trained standard model and carefully determining the order of depthwise separable convolution via GEP, \method achieves sufficient accuracy close to that of the trained standard model.
Furthermore, this interpretation leads to developing a generalized version rank-$k$ \method which performs $k$ independent \method operations and sums up the result.
Experiments show that \method 1) provides higher accuracy than existing methods based on depthwise separable convolution and tensor decomposition, and 2) reduces the number of parameters and FLOPs of standard convolution by up to a factor of $8$ while ensuring similar accuracy.
We also demonstrate that rank-$k$ \method further improves the accuracy while sacrificing a bit of compression and computation reduction rates.
\end{abstract}

\section{Introduction}

How can we efficiently reduce the number of parameters and FLOPS of convolutional neural networks (CNN) while maintaining their accuracy on classification tasks?
Nowadays, CNN is widely used in various areas including recommendation system (\cite{convMF}), computer vision (\cite{AlexNet,VGG,inception-v4}), natural language processing (\cite{ABCNN}), etc.
%
Due to an increase in the model capacity of CNNs,
there {has been a considerable interest in building lightweight CNNs.}
A major research direction utilizes depthwise separable convolution (DSConv) (\cite{DSCONV}), which consists of a depthwise convolution and a pointwise convolution, instead of standard convolution.
The depthwise convolution applies a separate 2D convolution kernel to each input channel, and the pointwise convolution changes the channel size using 1$\times$1 convolution.
The depthwise convolution extracts spatial features, and the pointwise convolution merges features along the channel dimension, while a standard convolution simultaneously performs the two tasks. Decoupling the tasks preserves the representation power of the network, and reduces the number of parameters and FLOPS.
Several recent methods (\cite{MobileNet,MobileNetV2,ShuffleNet,ShuffleNetV2}) have successfully built lightweight architectures by 1) constructing CNN with DSConv, and 2) training the model from scratch.


However, existing methods based on depthwise separable convolution cannot effectively compress trained CNNs. They fail to precisely formulate the relation between standard convolution and depthwise separable convolution, leading to the following limitations.
First, it is difficult to utilize the knowledge stored in the trained standard convolution kernels to fit the depthwise separable convolution kernels; i.e.,
existing approaches based on depthwise separable convolution need to be trained from scratch.
Second, although existing methods may give reasonable compression and computation reduction, there is a gap between the resulting accuracy and that of a standard-convolution-based model.
Finally,
generalizing the methods based on depthwise separable convolution is difficult due to their heuristic nature.



In this paper, we propose \method, an accurate and lightweight method to compress CNN by leveraging depthwise separable convolution.
We first precisely define the relationship between the standard convolution and the
depthwise separable convolution using GEP (Generalized Elementwise Product),
our proposed
mathematical formulation to approximate a standard convolution kernel with a depthwise convolution kernel and a pointwise convolution kernel.
GEP allows \method to be fit by a trained standard model, leading to a better accuracy.
\method minimizes the gap of the accuracy between the compressed model with \method and the  standard-convolution-based model by carefully aligning depthwise and pointwise convolutions via GEP.
Based on the precise definition, we generalize \method
to design rank-$k$ \method, which performs $k$ independent \method operations and sums up the result.
We also propose a variant of \method, \method-branch, by integrating \method with the channel split technique (\cite{ShuffleNetV2}), which splits feature channels into two branches, independently processes each branch, and concatenates the results from the branches.
As a result, \method and \method-branch provide a superior accuracy compared to other methods based on {depthwise separable convolution and tensor decomposition,} with similar compression (see Figure~\ref{fig:acc_size}) and the computation reduction rates (see Figure~\ref{fig:acc_flops}).
Rank-$k$ \method further improves accuracy, outperforming even the original convolution in many cases (see Table~\ref{tab:rank_perf}).

\vspace{1mm}

Our contributions are summarized as follows:

\begin{figure*} [t]
	\centering
	\subfloat{\includegraphics[width=0.7\textwidth]{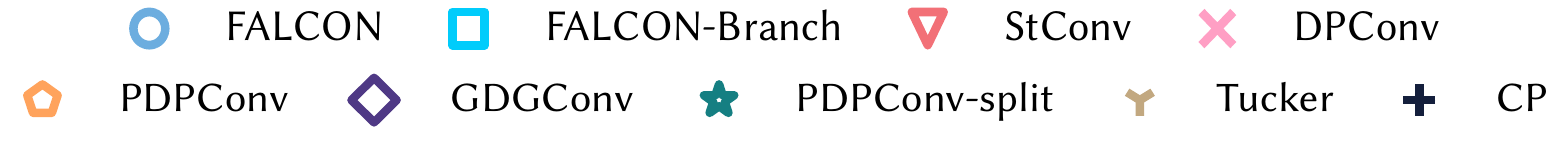}}\vspace{-4mm} \\
	\setcounter{subfigure}{0}
	\subfloat[VGG19-CIFAR10] {\includegraphics[width=0.27\textwidth]{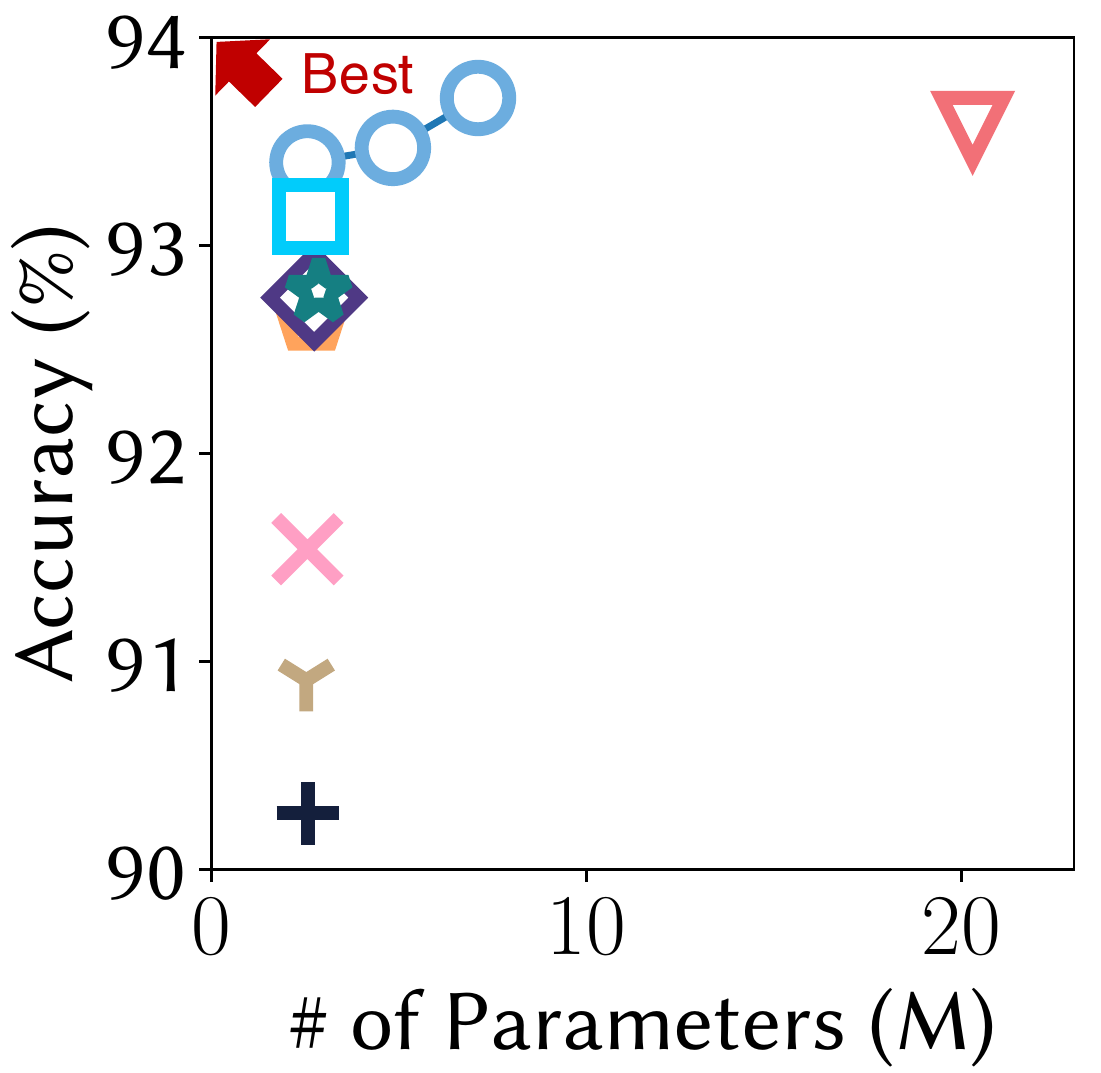}\label{fig:vgg19_cifar10}}
	\subfloat[VGG19-SVHN] {\includegraphics[width=0.265\textwidth]{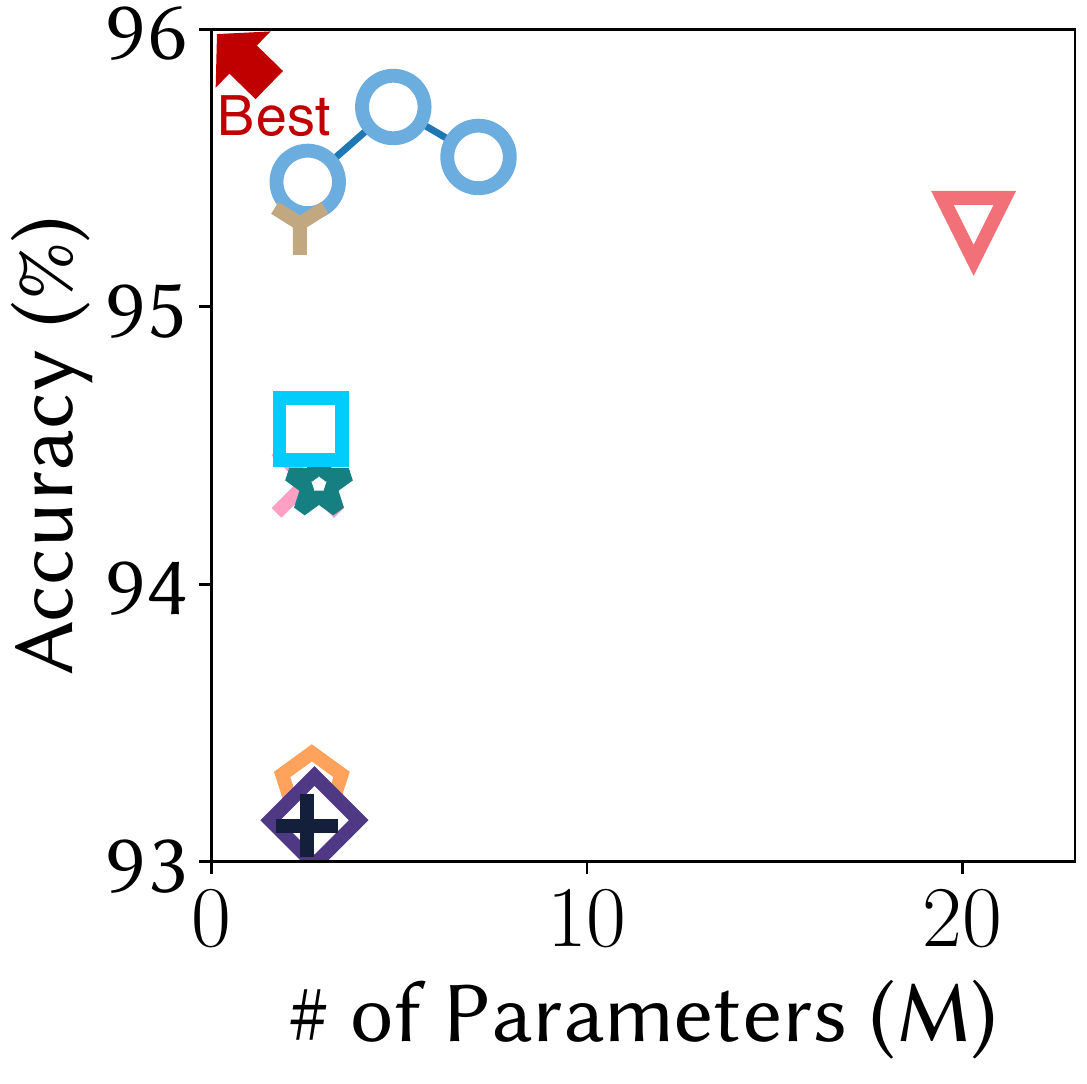}\label{fig:vgg19_svhn}}
	\subfloat[VGG16-ImageNet] {\includegraphics[width=0.28\textwidth]{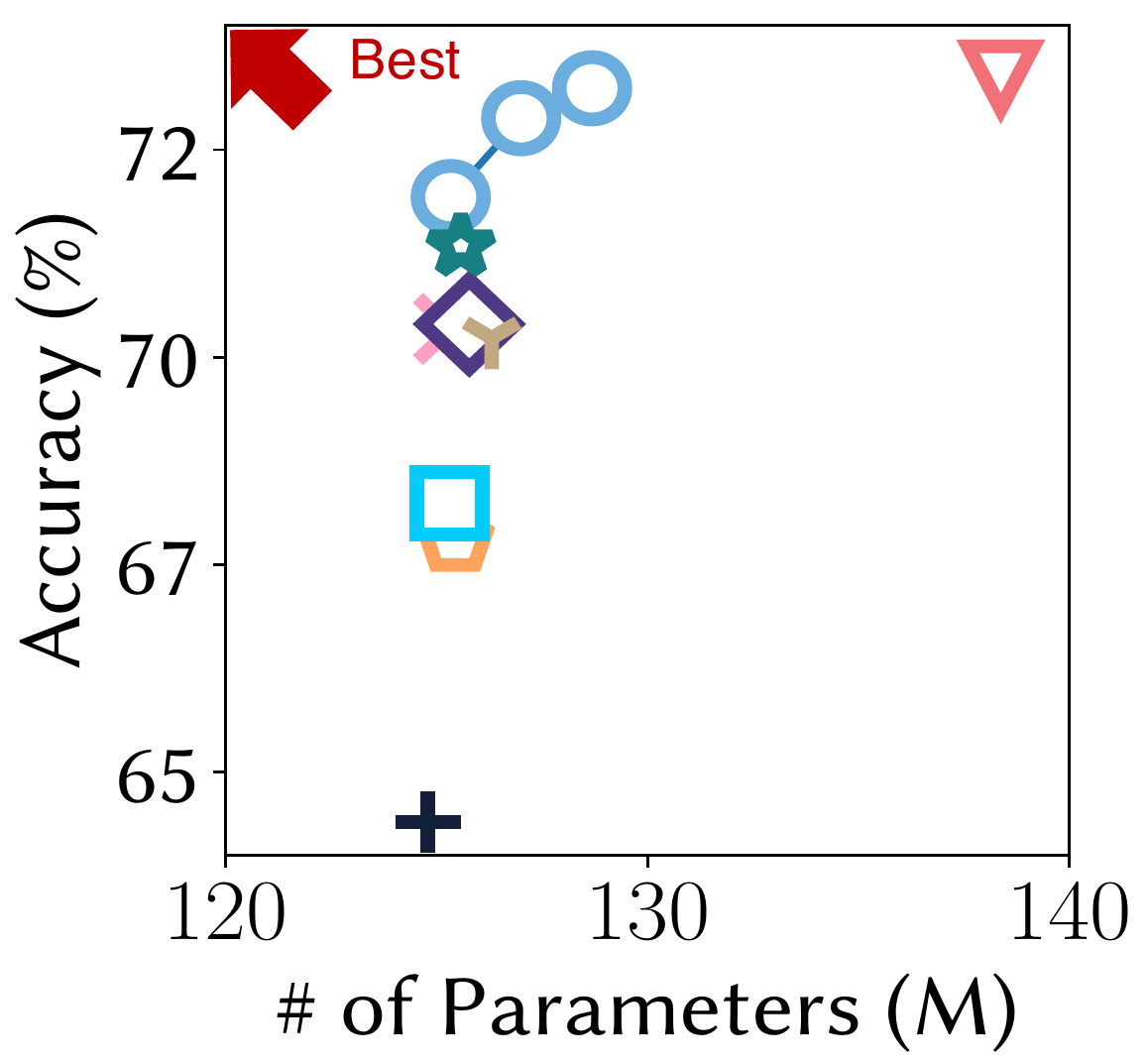}\label{fig:vgg19_imagenet}} \\
	\subfloat[ResNet34-CIFAR10] {\includegraphics[width=0.27\textwidth]{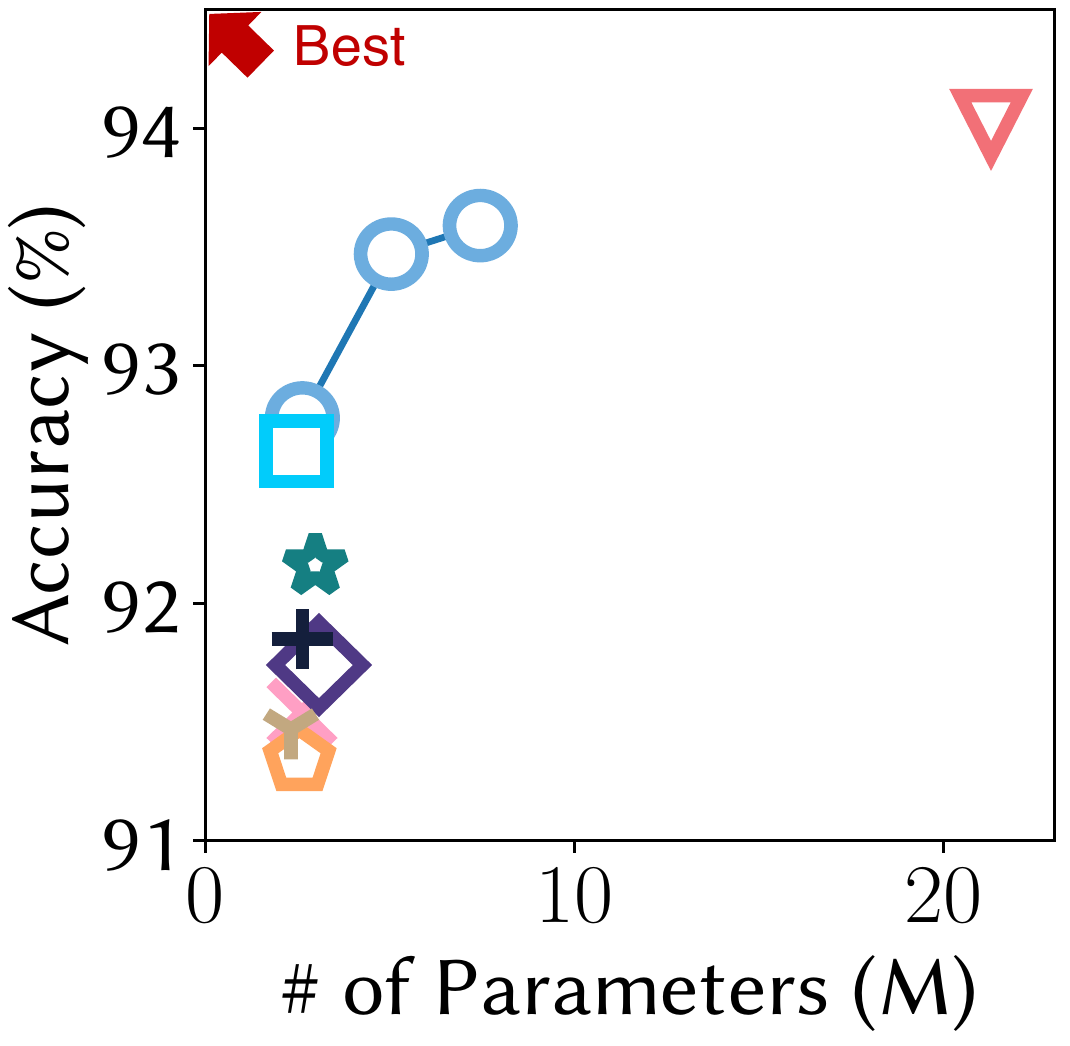}\label{fig:resnet_cifar10}}
	\subfloat[ResNet34-SVHN] {\includegraphics[width=0.275\textwidth]{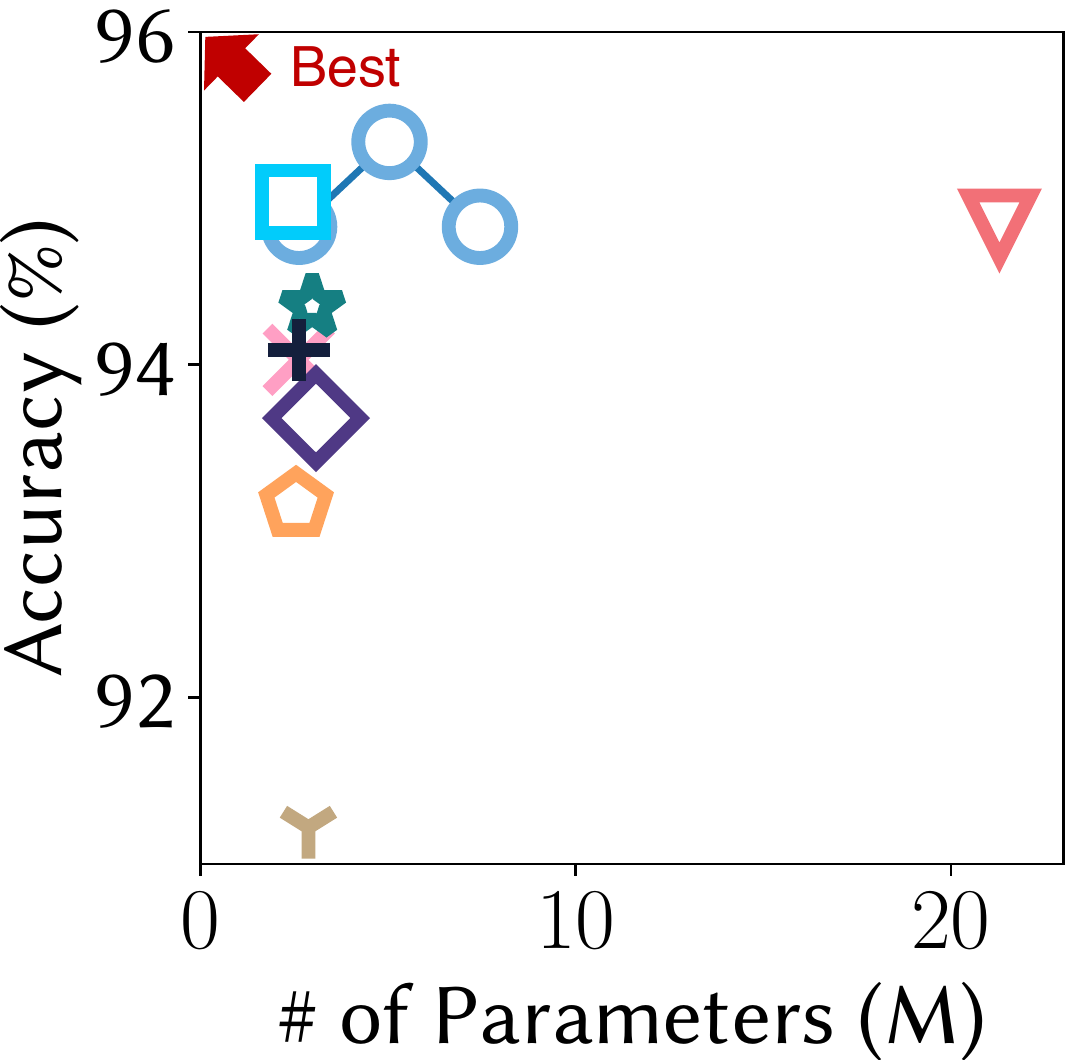}\label{fig:resnet_svhn}}
	\subfloat[ResNet18-ImageNet] {\includegraphics[width=0.27\textwidth]{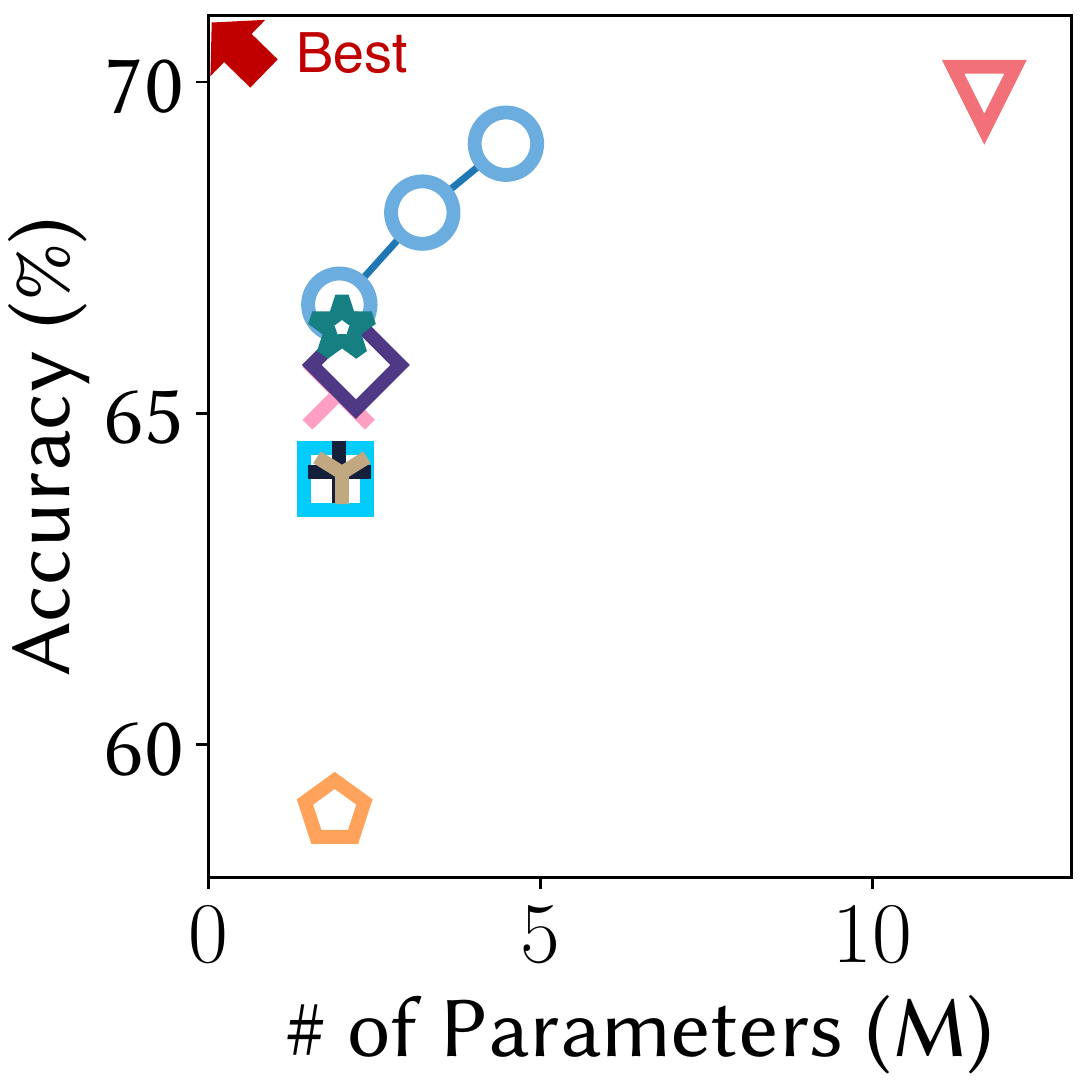}\label{fig:resnet_imagenet}}
	\caption{Accuracy w.r.t. number of parameters from different models and datasets.
The three blue circles correspond to rank-1, 2, and 3 \method (from left to right order), respectively.
\method provides the best accuracy for a given number of parameters.
}
\label{fig:acc_size}
\end{figure*}

\begin{itemize}
%
\item \textbf{Algorithm.} We propose {\method}, a CNN compression method based on depthwise separable convolution.
    \method and its variant are carefully designed to compress CNN with little accuracy loss.
    We also give theoretical analysis of compression and computation reduction rates of \method.
\item \textbf{Generalization.}
We generalize \method to design rank-$k$ \method, which further improves the accuracy, and often outperforms standard convolution, with a little sacrifice in compression and computation reduction rates.
\item \textbf{Experiments.} Extensive experiments show that \method
1) outperforms other state-of-the-art methods based on depthwise separable convolution and tensor decompositions for compressing CNN (see Figure~\ref{fig:acc_size}),
and
2) provides up to $8.61\times$ compression and $8.44\times$ computation reduction compared to the standard convolution while giving similar accuracies.
\end{itemize}

%


%
The code of \method is at \url{https://github.com/snudm-starlab/cnn-compression/tree/master/FALCON2}.
The rest of this paper is organized as follows.
We describe preliminaries,
our proposed method {\method}, and
experimental results.
After discussing related works,
we present conclusion.

\section{Preliminary}

We describe preliminaries on depthwise separable convolution, and methods based on it.
We use the symbols listed in Table~\ref{tab:symbol}.

\subsection{Convolutional Neural Network}
Convolution Neural Network (CNN) is a type of deep neural network used mainly for structured data.
CNN uses convolution operation in convolution layers.
In the following, we discuss CNN when applied to typical image data with RGB channels.

{
Each convolution layer has three components: input feature maps, convolution kernel, and output feature maps.
The input feature maps $\T{I} \in \mathbb{R}^{H \times W \times M}$ and the output feature maps $\T{O} \in \mathbb{R}^{H' \times W' \times N}$ are 3-dimensional tensors, and
the convolution kernel $\T{K} \in \mathbb{R}^{D \times D \times M \times N}$ is a 4-dimensional tensor.
}

The convolution operation is defined as:

\begin{align}
\T{O}_{h',w',n} = \sum^{D}_{i=1} \sum^{D}_{j=1} \sum^{M}_{m=1} \T{K}_{i,j,m,n} \cdot \T{I}_{h_i,w_j,m}
\label{eq:standardconv}
\end{align}
where the relations between height $h_i$ and width $w_j$ of input, and height $h'$ and width $w'$ of output are as follows:

\begin{align}
\label{eq:heightandwidth}
h_i=(h'-1)s+i-p \hspace{5mm}\text{and}\hspace{5mm} w_j=(w'-1)s+j-p
\end{align}
where $s$ is the stride size, and $p$ is the padding size.
The third and the fourth dimensions of the convolution kernel $\T{K}$ must match the number $M$ of input channels, and the number $N$ of output channels, respectively.

Convolution kernel $\T{K}$ can be seen as $N$ 3-dimensional filters $\T{F}_n \in \mathbb{R}^{D \times D \times M}$.
Each filter $\T{F}_n$ in kernel $\T{K}$ performs convolution operation while sliding over all spatial locations on input feature maps.
Each filter produces one output feature map.


\subsection{Depthwise Separable Convolution}
{
Depthwise Separable Convolution (DSConv) (\cite{DSCONV}) consists of two sub-layers: depthwise convolution and pointwise convolution.
}
By decoupling tasks done by a standard convolution kernel, each of the two decomposed kernels independently performs its own task.
Note that standard convolution performs two tasks: 1) extracts spatial features (depthwise convolution) and 2) merges features along the channel dimension (pointwise convolution).
Decoupling the tasks allows DSConv to preserve the representation power while reducing the number of parameters and FLOPS at the same time.
Depthwise convolution (DWConv) kernel consists of several $D \times D$ 2-dimensional filters.
The number of 2-dimension filters is the same as that of input feature maps.
Each filter is applied on the corresponding input feature map, and produces an output feature map.
Pointwise convolution (PWConv), or $1\times 1$ convolution, is a standard convolution with kernel size 1. %
DSConv is defined as:
\begin{align}
\T{O}'_{h',w',m} &= \sum^{D}_{i=1} \sum^{D}_{j=1} \T{D}_{i,j,m} \cdot \T{I}_{h_i,w_j,m} \label{eq_prelim:mobileconv_dw} \\
\T{O}_{h',w',n} &= \sum^{M}_{m=1} \mathbf{P}_{m,n} \cdot \T{O}'_{h',w',m} \label{eq_prelim:mobileconv_pw}
\end{align}
where $\T{D}_{i,j,m}$ and $\mathbf{P}_{m,n}$ are depthwise convolution kernel and pointwise convolution kernel, respectively.
$\T{O}'_{h',w',m} \in \mathbb{R}^{H' \times W' \times M}$ denotes intermediate feature maps.
DSConv performs DWConv on input feature maps $\T{I}_{h_i,w_j,m}$ using \eqref{eq_prelim:mobileconv_dw},
and generates intermediate feature maps $\T{O}'_{h',w',m}$.
Then, DSConv performs PWConv on $\T{O}'_{h',w',m}$ using \eqref{eq_prelim:mobileconv_pw},
and generates output feature maps $\T{O}_{h',w',n}$.





\begin{table}[t!]
\caption{Symbols.}
\label{tab:symbol}
\centering
\resizebox{0.6\textwidth}{!}{
\begin{tabular}{c l}
\toprule
\textbf{Symbol} & \textbf{Description} \\
\midrule
$\T{K}$ & convolution kernel of size $\mathbb{R}^{D \times D \times M \times N}$ \\
$\T{I}$ & input feature maps of size $\mathbb{R}^{H \times W \times M}$ \\
$\T{O}$ & output feature maps of size $\mathbb{R}^{H' \times W' \times N}$ \\
$D$ & height and width of kernel (kernel size) \\
$M$ & number of input feature map (input channels) \\
$N$ & number of output feature map (output channels)\\
$H$ & height of input feature map \\
$W$ & width of input feature map \\
$H'$ & height of output feature map \\
$W'$ & width of output feature map \\
$\odot_{E}$ & Generalized Elementwise Product (GEP) \\
\bottomrule
\end{tabular}
}
 \end{table}

\section{Proposed Method}

In this section, we describe \method, our proposed method for compressing CNN using depthwise separable convolution.
We first present an overview of \method.
Then, we define Generalized Elementwise Product (GEP), a key mathematical formulation to generalize depthwise separable convolution.
We propose {\method} and explain why {\method} can replace standard convolution.
Then, we propose rank-k {\method}, which extends the basic \method.
We show that \method can be easily integrated into a branch architecture for compressing CNN.
We discuss relations of GEP and other modules based on depthwise separable convolution.
Finally, we theoretically analyze the performance of \method.

\subsection{Overview}
\label{subsec:overview}

\method compresses a trained CNN model which consists of standard convolution.
There are several challenges to be tackled to compress CNN using depthwise separable convolution.
\begin{enumerate}
	\item \textbf{Utilizing the knowledge of a trained model.}
	The knowledge of a trained model improves the performance of a compressed model with depthwise separable convolution.
	How can we utilize trained standard convolution kernels to better train depthwise separable convolution kernels?
	\item \textbf{Minimize accuracy gap.}
 	Existing modules based on depthwise separable convolution suffers from accuracy drop when we replace standard convolution with those modules.
	How can we design a depthwise separable convolution module to minimize the gap of accuracy between the trained model and the compressed model?
	\item \textbf{Flexibility.}
How can we provide a flexible method to adjust the accuracy and efficiency of a compressed model for various application scenarios?
	\end{enumerate}

To address the above challenges, we propose the following three main ideas which we elaborate in detail in the following subsections.

\begin{enumerate}
	\item \textbf{Formulating the precise relation} between standard convolution and depthwise separable convolution
allows us to fit depthwise separable convolution kernels to trained standard convolution kernels.
	\item \textbf{Carefully determining the order} of depthwise convolution and pointwise convolution improves the accuracy.
	\item  \textbf{Generalizing the formulation} provides an effective trade-off between accuracy and efficiency.
\end{enumerate}

Given a trained model, we first replace the standard convolution with \method which performs depthwise convolution after pointwise convolution.
We fit kernels of \method to those of standard convolution based on our precise formulation of the relation between standard convolution and depthwise separable convolution.
Then, we fine-tune the entire model using training data.

\subsection{Generalized Elementwise Product (GEP)}
\label{subsec:GEP}

The first challenge of compression with depthwise separable convolution is how to exploit the knowledge stored in a trained CNN model.
A naive approach would replace standard convolution with heuristic modules based on depthwise separable convolution and randomly initialize the kernels;
however, it does not leads to a good accuracy since it cannot use the knowledge of the trained model.
Our goal is to precisely figure out the relation between standard convolution and depthwise separable convolution, such that we fit the kernels of depthwise separable convolution to those of standard convolution.

For the purpose,
we define Generalized Elementwise Product (GEP), a generalized elementwise product for two operands of different shapes, to generalize the formulation of the relation between standard convolution and depthwise separable convolution.
%
Before generalizing the formulation, we give an example of formulating the relation between standard convolution and depthwise separable convolution.
Suppose we have a 4-order standard convolution kernel $\T{K} \in \mathbb{R}^{I \times J \times M \times N}$, a 3-order depthwise convolution kernel $\T{D} \in \mathbb{R}^{I \times J \times M}$, and a pointwise convolution kernel $\mathbf{P} \in \mathbb{R}^{M \times N}$.
Let $\T{K}_{i,j,m,n}$ be $(i,j,m,n)$-th element of $\T{K}$,
$\T{D}_{i,j,m}$ be $(i,j,m)$-th element of $\T{D}$, and $\mathbf{P}_{m,n}$ be $(m,n)$-th element of $\mathbf{P}$.
Then, it can be shown that applying depthwise convolution with $\T{D}$ and pointwise convolution with $\mathbf{P}$ is equivalent to applying standard convolution kernel $\T{K}$ where $\T{K}_{i,j,m,n} = \T{D}_{i,j,m} \cdot \mathbf{P}_{m,n}$ (see Theorem~\ref{theo:theo1}).



To formally express this relation, we define Generalized Elementwise Product (GEP) as follows.
\begin{definition}[Generalized Elementwise Product]
\label{def:GEP}
Given a $p$-order tensor $\T{D} \in \mathbb{R}^{I_1 \times \cdots \times I_{p-1} \times M}$ and a $q$-order tensor $\T{P} \in \mathbb{R}^{M \times J_1 \times \cdots \times J_{q-1}}$, the Generalized Elementwise Product $\T{D} \odot_{E} \T{P}$ of $\T{D}$ and $\T{P}$ is defined to be the $(p+q-1)$-order tensor $\T{K} \in$$ \mathbb{R}^{I_1 \times \ldots \times I_{p-1} \times M \times J_1 \times \ldots \times J_{q-1}}$ where the last axis of $\T{D}$ and the first axis of $\T{P}$ are the common axes such that for all elements of $\T{K}$,
\begin{align*}
\T{K}_{i_1,...,i_{p-1},m,j_1,...,j_{q-1}} = \T{D}_{i_1,...,i_{p-1},m} \cdot \T{P}_{m,j_1,...,j_{q-1}} \tag*{\QEDB}
\end{align*}
\end{definition}

Contrary to Hadamard Product which is defined only when the shapes of the two operands are the same, Generalized Elementwise Product (GEP) deals with tensors of different shapes.
Now, we define a special case of Generalized Elementwise Product (GEP) for a third-order tensor and a matrix. 
%
\begin{definition}[Generalized Elementwise Product for a third order tensor and a matrix]
\label{def:GEP_special}
Given a third-order tensor $\T{D} \in \mathbb{R}^{I \times J \times M}$ 
and a matrix $\mathbf{P} \in \mathbb{R}^{M \times N}$, 
the Generalized Elementwise Product $\T{D} \odot_{E} \mathbf{P}$ of $\T{D}$ and $\T{P}$ is defined to be the tensor $\T{K} \in$$ \mathbb{R}^{I \times J \times M \times N}$ where the third axis of the tensor $\T{D}$ and the first axis of the matrix $\mathbf{P}$ are the common axes such that for all elements of $\T{K}$,
\begin{align*}
\T{K}_{i,j,m,n} = \T{D}_{i,j,m} \cdot \mathbf{P}_{m,n} \tag*{\QEDB}
\end{align*}
\end{definition}

We will see that a standard convolution with kernel $\T{K}$ is essentially
equivalent to GEP of a depthwise convolution kernel $\T{D}$ and a pointwise convolution kernel $\T{P}$.
This allows to fit the kernels of depthwise separable convolution to those of standard convolution.
GEP is a core module of \method, and its variants rank-$k$ \method and \method-branch.
We also show that GEP is a core primitive that helps us understand other modules based on DSConv.

\subsection{\method: Lightweight and Accurate Convolution}
\label{subsec:FALCON}

The second and the most important challenge is how to minimize the accuracy gap between a trained model and a compressed model.
Existing modules based on depthwise separable convolution fail to address the challenge since they are heuristically designed and randomly initialized.

We design \method, a novel lightweight and accurate convolution that replaces standard convolution.
\method is an efficient method with fewer parameters and computations than what the standard convolution requires.
In addition, \method has a better accuracy than competitors while having similar compression and computation reduction rates.
Our main idea is to construct \method with the following guidelines:
\begin{enumerate}
	\item
	\method utilizes one depthwise convolution and one pointwise convolution to approximate the trained standard convolution.
	As we will see in Theorem~\ref{theo:theo2}, applying one pointwise convolution and one depthwise convolution is equivalent to applying a standard convolution whose kernel is defined with GEP of depthwise and pointwise kernels.
	\item
	We observe that a typical convolution has more output channels than input channels.
In such a setting, performing depthwise convolution after pointwise convolution
would allow the depthwise convolution to extract more features from richer feature space;
on the other hand, performing pointwise convolution after depthwise convolution only combines features extracted from a limited feature space.
\end{enumerate}
Based on the guidelines, \method first applies pointwise convolution to generate an intermediate tensor $\T{O}' \in \mathbb{R}^{H \times W \times N}$ 
and then applies depthwise convolution.
%
%

\begin{figure} [t]
\centering
\includegraphics[width=0.5\textwidth]{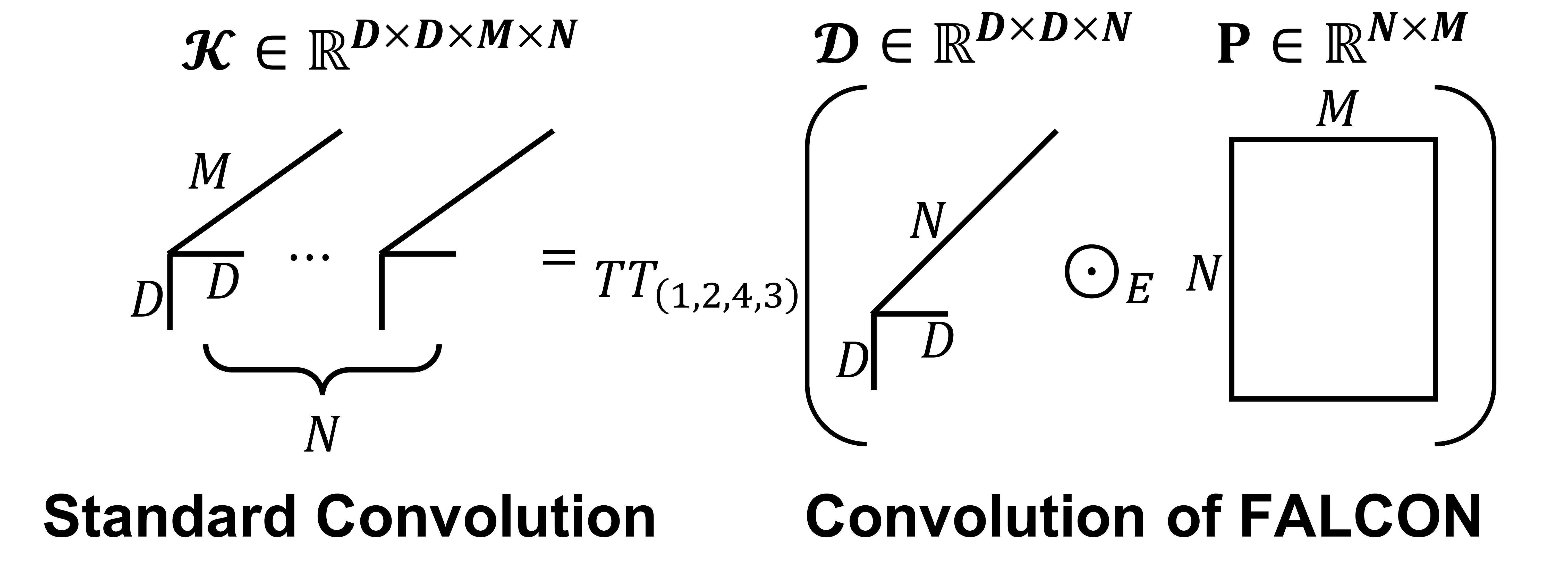}
\caption{
Relation between standard convolution and \method expressed with GEP.
The common axes correspond to the output channel-axis of standard convolution.
{$TT_{(1,2,4,3)}$ indicates tensor transpose operation to permute the third and the fourth dimensions of a tensor.}
}
\label{fig:GEP_falcon}
\end{figure}

We represent the relationship between standard convolution kernel $\T{K} \in \mathbb{R}^{D \times D \times M \times N}$ and \method using an GEP operation of
pointwise convolution kernel $\mathbf{P} \in \mathbb{R}^{N \times M}$ and
depthwise convolution kernel $\T{D} \in \mathbb{R}^{D \times D \times N}$ in Figure~\ref{fig:GEP_falcon}.
In \method, the kernel $\T{K}$ is represented as follows:
\begin{align}
\label{eq:gep_falcon}
\T{K} = TT_{(1,2,4,3)} (\T{D} \odot_{E} \mathbf{P}) \hspace{3mm}\text{s.t.} \hspace{3mm}
\T{K}_{i,j,m,n} = \mathbf{P}_{n,m} \cdot \T{D}_{i,j,n}.
\end{align}
where $TT_{(1,2,4,3)}$ indicates tensor transpose operation to permute the third and the fourth dimensions of a tensor.
Note that the common axis is the output channel axis of the standard convolution, unlike GEP for depthwise separable convolution where the common axis is the input channel axis of the standard convolution.

We show that applying \method is equivalent to applying standard convolution with a specially constructed kernel.
\begin{theorem}
\label{theo:theo2}
\method which applies
pointwise convolution with kernel $\mathbf{P} \in \mathbb{R}^{N \times M}$ and then
depthwise convolution with kernel $\T{D} \in \mathbb{R}^{D \times D \times N}$
is equivalent to applying standard convolution with kernel $\T{K} = TT_{(1,2,4,3)} (\T{D} \odot_{E} \mathbf{P})$. \QEDB
\end{theorem}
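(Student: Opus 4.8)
The plan is to prove the equivalence by direct computation: expand the two-step \method operation (pointwise convolution followed by depthwise convolution) into a single summation over the input feature maps, and then match the resulting coefficients against the standard convolution formula in \eqref{eq:standardconv}.

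First I would write out the intermediate tensor produced by the pointwise step. Since pointwise convolution is a $1\times 1$ convolution with unit stride, it leaves the spatial indices untouched, giving $\T{O}'_{h,w,n} = \sum_{m=1}^{M} \mathbf{P}_{n,m}\,\T{I}_{h,w,m}$ with $\T{O}' \in \mathbb{R}^{H \times W \times N}$. Next I would apply the depthwise convolution with kernel $\T{D}$ to $\T{O}'$, carrying the stride $s$ and padding $p$ of the original layer, so that $\T{O}_{h',w',n} = \sum_{i=1}^{D}\sum_{j=1}^{D} \T{D}_{i,j,n}\,\T{O}'_{h_i,w_j,n}$, where $h_i$ and $w_j$ are exactly as in \eqref{eq:heightandwidth}. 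Substituting the first expression into the second and interchanging the finite sums yields
\begin{align*}
\T{O}_{h',w',n} = \sum_{i=1}^{D}\sum_{j=1}^{D}\sum_{m=1}^{M} \bigl(\T{D}_{i,j,n}\,\mathbf{P}_{n,m}\bigr)\,\T{I}_{h_i,w_j,m}.
\end{align*}

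Comparing this term by term with \eqref{eq:standardconv}, the composition is a standard convolution whose kernel entries are $\T{K}_{i,j,m,n} = \T{D}_{i,j,n}\,\mathbf{P}_{n,m}$. It then remains to recognize this kernel as $TT_{(1,2,4,3)}(\T{D} \odot_{E} \mathbf{P})$: by Definition~\ref{def:GEP_special}, applied with the common axis being the size-$N$ axis, $(\T{D} \odot_{E} \mathbf{P})_{i,j,n,m} = \T{D}_{i,j,n}\,\mathbf{P}_{n,m}$, and permuting the third and fourth axes via $TT_{(1,2,4,3)}$ lands this quantity in position $(i,j,m,n)$, giving exactly $\T{K}_{i,j,m,n} = \T{D}_{i,j,n}\,\mathbf{P}_{n,m}$ as in \eqref{eq:gep_falcon}.

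I expect the only delicate point to be bookkeeping of the spatial indices together with the stride/padding: one must argue that the $1\times 1$ pointwise step neither shifts nor downsamples the spatial grid, so that the receptive-field map $(h',w') \mapsto (h_i,w_j)$ of the composed operation coincides with that of a single standard convolution using the same $s$ and $p$; and one must keep the axis ordering straight so that the transpose $TT_{(1,2,4,3)}$ places the output-channel axis into the fourth slot of $\T{K}$ as required by \eqref{eq:standardconv}. Everything else is a routine interchange of finite sums.
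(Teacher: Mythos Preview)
Your proposal is correct and follows essentially the same approach as the paper: a direct expansion of the sums and matching of coefficients against \eqref{eq:standardconv}. The only cosmetic difference is direction---the paper starts from the standard convolution with kernel $\T{K}_{i,j,m,n}=\mathbf{P}_{n,m}\T{D}_{i,j,n}$ and splits it into the pointwise and depthwise steps, whereas you start from the two steps and combine them---but the computation is identical.
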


\begin{proof}
\label{proof:theo2}
Based on the Equation~\eqref{eq:gep_falcon}, we re-express
Equation~\eqref{eq:standardconv} by replacing the kernel $\T{K}_{i,j,m,n}$ with the pointwise convolution kernel $\mathbf{P}_{n,m}$ and the depthwise convolution kernel $\T{D}_{i,j,n}$. 
\begin{align*}
\T{O}_{h',w',n} =  \sum^{M}_{m=1} \sum^{D}_{i=1} \sum^{D}_{j=1} \mathbf{P}_{n,m} \cdot \T{D}_{i,j,n} \cdot \T{I}_{h_i,w_j,m}
\end{align*}
where $\T{I}_{h_i, w_j, m}$ is the $(h_i, w_j, m)$-th entry of the input $\T{I}$.
We split the above equation into the following two equations.
\begin{align}
\T{O}'_{h_i,w_j,n} &= \sum^{M}_{m=1} \mathbf{P}_{n,m} \cdot \T{I}_{h_i,w_j,m} \label{eq:falcon_pw1} \\
\T{O}_{h',w',n} &= \sum^{D}_{i=1} \sum^{D}_{j=1} \T{D}_{i,j,n} \cdot \T{O}'_{h_i,w_j,n} \label{eq:falcon_dw1}
\end{align}
where $\T{I}$, $\T{O}'$, and $\T{O}$ are the input, the intermediate, and the output {tensors} of convolution layer, respectively.
Note that \eqref{eq:falcon_pw1} and~\eqref{eq:falcon_dw1}
correspond to pointwise convolution and depthwise convolution, respectively.
	Therefore, the output {$\T{O}_{h',w',n}$} is equal to the output from applying \method.
\end{proof}


\begin{algorithm} [t]
	\SetNoFillComment
	\caption{\method}\label{alg:method}
	\begin{algorithmic} [1]
		\small
		\algsetup{linenosize=\small}
		\renewcommand{\algorithmicrequire}{\textbf{Input:}}
		\renewcommand{\algorithmicensure}{\textbf{Output:}}
		    \REQUIRE training data, and a trained model consisting of standard convolution
		    \ENSURE a trained compressed model

		\STATE \textbf{Stage 1: Replacement and initialization.} Replace each standard convolution with \method.
		Then, initialize the pointwise convolution kernel $\T{D}$ and the depthwise convolution kernel $\mathbf{P}$ of \method by fitting them to the standard convolution kernels of the trained model; i.e.,
$\T{D}, \mathbf{P} = \argmin_{\T{D'}, \mathbf{P'}} ||\T{K}- TT_{(1,2,4,3)} (\T{D'} \odot_{E} \mathbf{P'})||_F$. \label{alg:line:stage1}
	We repeatedly update $\T{D}$ and $\mathbf{P}$ by minimizing $||\T{K}- TT_{(1,2,4,3)} (\T{D'} \odot_{E} \mathbf{P'})||_F$ until a maximum number of iterations is exceeded.
		\STATE \textbf{Stage 2: Fine-tuning.} Train all the parameters from the initialized model using training data. \label{alg:line:stage2}
\\
	\end{algorithmic}
\end{algorithm}

Algorithm~\ref{alg:method} summarizes how \method works.
Given a trained model with standard convolution, we replace each standard convolution with \method.
Based on the equivalence, we fit the pointwise convolution and the depthwise convolution kernels $\T{D}$ and $\mathbf{P}$ of \method to the convolution kernels of the trained standard model; i.e.,
$\T{D}, \mathbf{P} = \argmin_{\T{D'}, \mathbf{P'}} ||\T{K}- TT_{(1,2,4,3)} (\T{D'} \odot_{E} \mathbf{P'})||_F$ (stage~\ref{alg:line:stage1} in Algorithm~\ref{alg:method}).
We repeatedly update $\T{D}$ and $\mathbf{P}$ by minimizing $||\T{K}- TT_{(1,2,4,3)} (\T{D'} \odot_{E} \mathbf{P'})||_F$ until a maximum number of iterations is exceeded.
$\T{D}$ and $\mathbf{P}$, obtained by minimizing $||\T{K}- TT_{(1,2,4,3)} (\T{D'} \odot_{E} \mathbf{P'})||_F$, 
allow a compressed model to learn from better initialization than random initialization.
After pointwise convolution and depthwise convolution, we add batch-normalization and ReLU activation function.
Finally, we fine-tune all the parameters of the comrpessed model with \method using training data (stage~\ref{alg:line:stage2} in Algorithm~\ref{alg:method}).
%
We note that \method significantly reduces the numbers of parameters and FLOPs compared to standard convolution.

\begin{figure*} [t]
	\centering
	\subfloat[StConv-branch] {\includegraphics[width=0.25\textwidth,height=0.215\textheight]{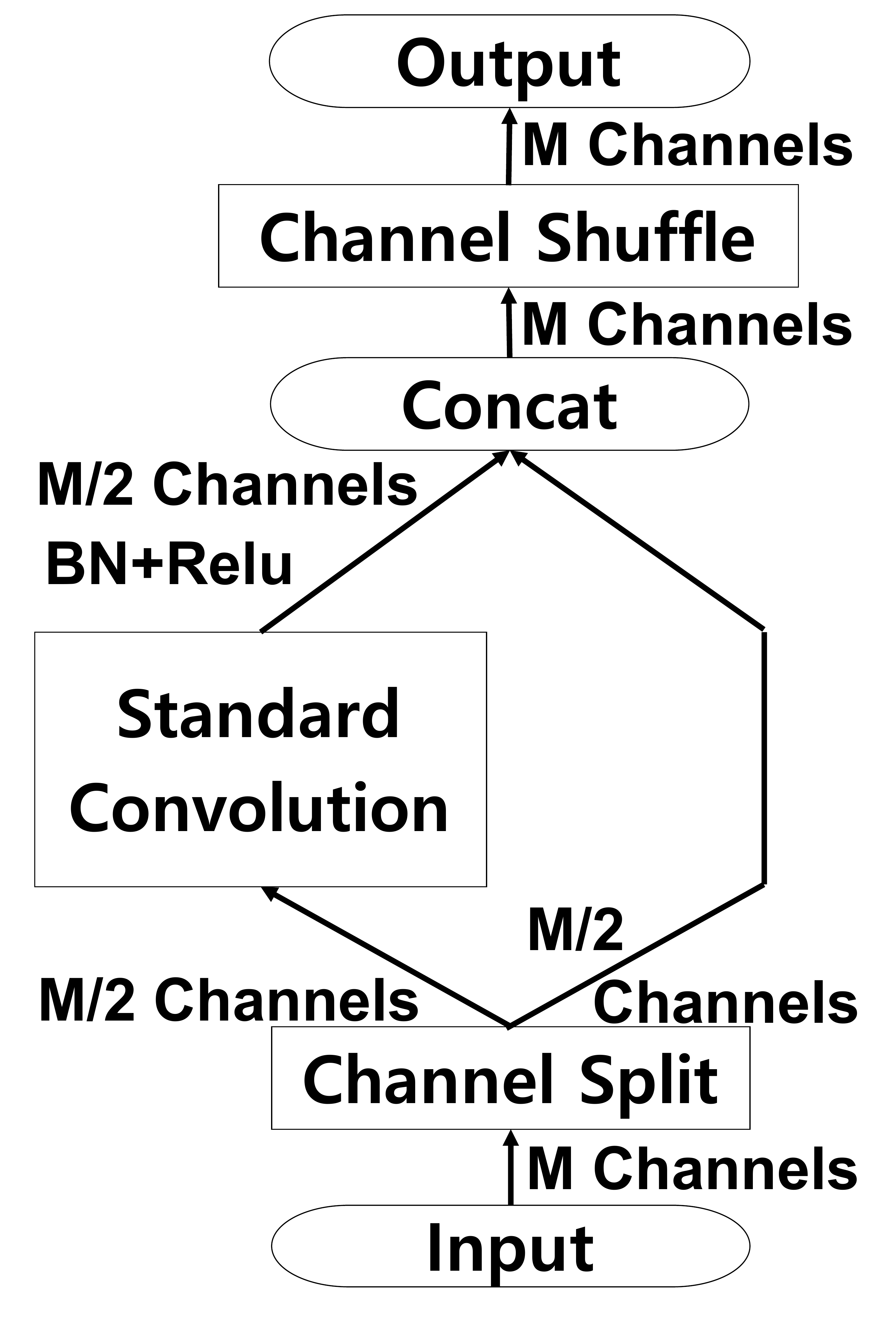}\label{fig:stconv_branch}}
	\subfloat[\method-branch (proposed)] {\includegraphics[width=0.3\textwidth,height=0.215\textheight]{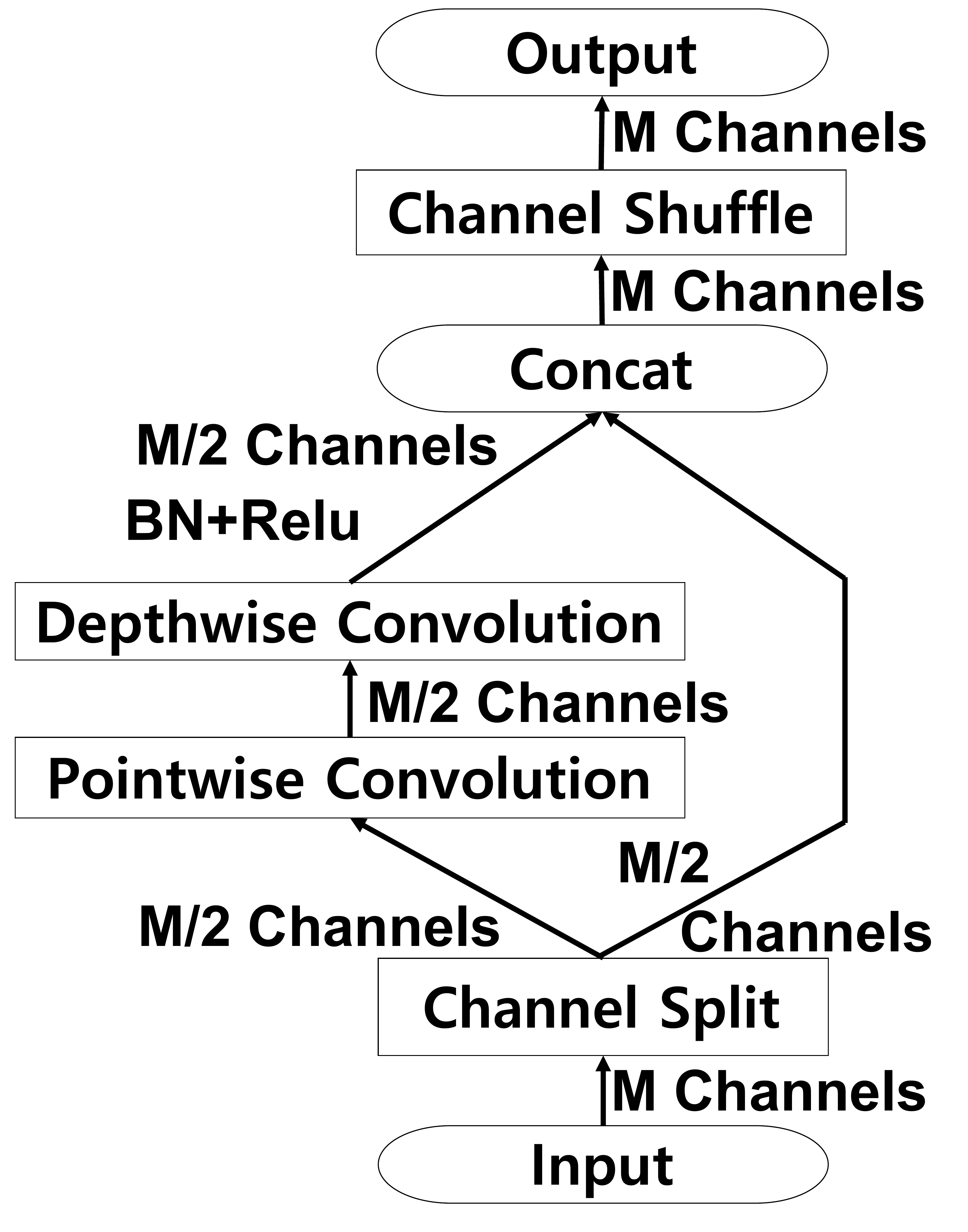}\label{fig:falcon_branch}}
	\caption{
	Comparison of architectures. BN denotes batch-normalization. Relu and Relu6 are activation functions.
	\label{fig:conv_archi}
(a) Standard convolution with branch (StConv-branch).
(b) \method-branch which combines \method with StConv-branch.
}
\end{figure*}



\subsection{Rank-$k$ \method}
\label{subsec:rank_falcon}

The third challenge is to flexibly adjust the accuracy and efficiency of a compressed model for various application scenarios.
In general, a compressed model provides a lower accuracy while requiring smaller number of parameters compared to the original model.
How can we make a compressed model improve its accuracy while sacrificing compression?

We propose rank-$k$ \method,
an extended version of \method that further improves accuracy while sacrificing a bit of compression and computation reduction rates.
The main idea is to perform $k$ independent \method operations
and sum up the result.
Then, we apply batch-normalization (BN) and ReLU activation function to the summed result.
Since each \method operation requires independent parameters for pointwise convolution and depthwise convolution, the number of parameters increases and thus the compression and the computation reduction rates decrease; however, it improves accuracy by enlarging the model capacity.
We formally define the rank-$k$ \method with GEP as follows.



\begin{definition}[Rank-$k$ \method with Generalized Elementwise Product]
Rank-$k$ \method expresses
standard convolution kernel $\T{K} \in \mathbb{R}^{D \times D \times M \times N}$
as GEP of depthwise convolution kernel $\T{D}^{(r)} \in \mathbb{R}^{D \times D \times N}$ and pointwise convolution kernel $\mathbf{P}^{(r)} \in \mathbb{R}^{N \times M}$ for $r = 1,2, ..., k$.
\begin{align*}
\T{K} = \sum^{k}_{r=1} TT_{(1,2,4,3)} (\T{D}^{(r)} \odot_{E} \mathbf{P}^{(r)})
\quad \text{s.t.} \quad 
\T{K}_{i,j,m,n} = \sum^{k}_{r=1} \mathbf{P}^{(r)}_{n,m} \cdot \T{D}^{(r)}_{i,j,n}\tag*{\QEDB}
\end{align*}
\end{definition}
For each $r = 1,2,...,k$, we construct the tensor $\T{K}^{(r)}$ using GEP of
the depthwise convolution kernel $\T{D}^{(r)}$ and
the pointwise convolution kernel $\mathbf{P}^{(r)}$.
Then, we construct the standard kernel $\T{K}$ by the sum of the tensors $\T{K}^{(r)}$ for all $r$.



\subsection{\method-Branch}
\label{subsec:method:branch}

\method can be easily integrated into a CNN architecture called standard convolution with branch (StConv-branch), which consists of two branches: standard convolution on the left branch and a residual connection on the right branch.
ShuffleNetV2 (\cite{ShuffleNetV2}) improved the performance of CNN by applying depthwise and pointwise convolutions on the left branch of StConv-branch.
Since \method replaces standard convolution,
we observe that StConv-branch can be easily compressed by applying \method on the left branch.

Figure~\ref{fig:stconv_branch} illustrates StConv-branch. 
StConv-branch first splits an input in half along the depth dimension.
A standard convolution operation is applied to one half, and no operation to the other half.
The two are concatenated along the depth dimension, and an output is produced by shuffling the channels of the concatenated tensor.
As shown in Figure~\ref{fig:falcon_branch}, \method-branch is constructed by replacing the standard convolution branch (left branch) of StConv-branch with \method.
Advantages of \method-branch are that
1) the branch architecture improves the efficiency since convolutions are applied to only half of input feature maps and that
2) \method further compresses the left branch effectively.
\method-branch is initialized by fitting kernels of \method to the standard convolution kernel of the left branch of StConv-branch.

\hide{
Next, we describe applying rank-$k$ \method to Equation~\eqref{eq:standardconv}.
We replace the $(i,j,m,n)$-th entry of standard kernel $\T{K}$ to pointwise convolution ${P}^{(k)}_{m,n}$ and depthwise convolution $\T{D}^{(k)}_{i,j,n}$ for all $k = 1,2,...,K$ as follows:
\begin{align}
\T{O}_{h',w',n} = \sum^{K}_{k=1} \T{O}^{(k)}_{h',w',n} = \sum^{K}_{k=1} \sum^{M}_{m=1} \sum^{D}_{i=1} \sum^{D}_{j=1} \mathbf{P}^{(k)}_{m,n} \cdot \T{D}^{(k)}_{i,j,n} \cdot \T{I}_{h_i,w_j,m} \label{eq:kfalcon}
\end{align}
where $\T{O}^{(k)} \in \mathbb{R}^{H' \times W' \times N}$ is the output feature maps obtained by applying $k$-th \method to the input feature maps $\T{I}$, and $\T{O}^{(k)}_{h',w',n}$ is $(h',w',n)$-th entry of $\T{O}^{(k)}$.
As with Sections~\ref{subsec:DSConvGEP} and~\ref{subsec:FALCON}, $\T{O}^{(k)}$ is computed by the following two equations:
\begin{align}
\T{O}^{\prime k}_{h_i,w_j,n} &= \sum_{m=1}^{M} \mathbf{P}^{(k)}_{m,n} \cdot \T{I}_{h_i,w_j,m} \label{eq:falcon_pw_k} \\
\T{O}^{(k)}_{h',w',n} &= \sum^{D}_{i=1} \sum^{D}_{j=1} \T{D}_{i,j,n} \cdot \T{O}^{\prime k}_{h_i,w_j,n} \label{eq:falcon_dw_k}
\end{align}
where $\T{O}^{\prime k}$ is the intermediate tensor after applying pointwise convolution of $k$-th \method to input feature maps $\T{I}$.
Those Equations~\eqref{eq:falcon_pw_k} and~\eqref{eq:falcon_dw_k} indicate that $k$-th \method first applies pointwise convolution and then applies depthwise convolution for $k$-th output tensor $\T{O}^{(k)}$.
In addition, Equation~\eqref{eq:kfalcon} indicates that output feature maps $\T{O}$ is computed by element-wise sum of $\T{O}^{(k)}$ for all $k=1,2,...,K$.
}

\subsection{Relation to Modules using DSConv}
\label{subsec:DSConvGEP}

Recent works (\cite{MobileNet,MobileNetV2,MobileNetV3,ShuffleNet,ShuffleNetV2}) have constructed lightweight architectures by designing modules based on depthwise separable convolution.
Our proposed formulation GEP is crucial to understand modules based on depthwise separable convolution.
We discuss the relation between GEP and those modules.

\subsubsection{Depthwise-Pointwise Convolution Module (DPConv)}
%
DPConv, which is used in MobilenetV1 (\cite{MobileNet}), performs pointwise convolution after depthwise convolution.
%
%
We show that applying depthwise separable convolution with $\T{D}$ and $\mathbf{P}$ is equivalent to applying standard convolution with a kernel $\T{K}$ which is constructed from $\T{D}$ and $\mathbf{P}$ using GEP.
\begin{theorem}
\label{theo:theo1}
Applying depthwise separable convolution with depthwise convolution kernel $\T{D} \in \mathbb{R}^{D \times D \times M}$ and pointwise convolution kernel $\mathbf{P} \in \mathbb{R}^{M \times N}$ is equivalent to applying standard convolution with kernel $\T{K} = \T{D} \odot_{E} \mathbf{P}$. \qed
\end{theorem}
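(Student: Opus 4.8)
The plan is to reuse the template of the proof of Theorem~\ref{theo:theo2}, swapping the roles of the depthwise and pointwise stages. I would substitute the GEP factorization of the kernel into the standard-convolution formula \eqref{eq:standardconv} and then regroup the resulting triple sum so that it separates into a per-channel $D\times D$ spatial convolution followed by a $1\times 1$ channel mixing; these are exactly the two equations \eqref{eq_prelim:mobileconv_dw} and \eqref{eq_prelim:mobileconv_pw} that define DSConv.

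In detail, by Definition~\ref{def:GEP_special} the kernel $\T{K} = \T{D}\odot_{E}\mathbf{P}\in\mathbb{R}^{D\times D\times M\times N}$ has entries $\T{K}_{i,j,m,n}=\T{D}_{i,j,m}\cdot\mathbf{P}_{m,n}$, so \eqref{eq:standardconv} becomes
\begin{align*}
\T{O}_{h',w',n} = \sum_{i=1}^{D}\sum_{j=1}^{D}\sum_{m=1}^{M} \T{D}_{i,j,m}\cdot\mathbf{P}_{m,n}\cdot\T{I}_{h_i,w_j,m}.
\end{align*}
Since all index ranges are finite, I can move the sum over $m$ to the outside, and as $\mathbf{P}_{m,n}$ does not depend on $i,j$ it factors out of the inner double sum:
\begin{align*}
\T{O}_{h',w',n} = \sum_{m=1}^{M} \mathbf{P}_{m,n}\left(\sum_{i=1}^{D}\sum_{j=1}^{D} \T{D}_{i,j,m}\cdot\T{I}_{h_i,w_j,m}\right).
\end{align*}
Setting $\T{O}'_{h',w',m} := \sum_{i=1}^{D}\sum_{j=1}^{D}\T{D}_{i,j,m}\cdot\T{I}_{h_i,w_j,m}$, the bracketed expression is precisely \eqref{eq_prelim:mobileconv_dw} (depthwise convolution producing the intermediate tensor $\T{O}'\in\mathbb{R}^{H'\times W'\times M}$), and the remaining outer sum $\T{O}_{h',w',n}=\sum_{m=1}^{M}\mathbf{P}_{m,n}\cdot\T{O}'_{h',w',m}$ is precisely \eqref{eq_prelim:mobileconv_pw} (pointwise convolution). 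Hence the standard convolution with kernel $\T{K}=\T{D}\odot_{E}\mathbf{P}$ produces, entrywise, exactly the output of DSConv run in the DPConv order (depthwise, then pointwise) with kernels $\T{D}$ and $\mathbf{P}$, which is the claimed equivalence.

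I do not anticipate any real obstacle. The whole argument rests on the single structural fact that the GEP entry $\T{K}_{i,j,m,n}=\T{D}_{i,j,m}\cdot\mathbf{P}_{m,n}$ is separable across the spatial pair $(i,j)$ and the output index $n$ through the shared channel index $m$, which is exactly what permits the triple sum to factor. The only steps needing (routine) care are justifying the reordering of the finite summations and confirming that the intermediate tensor $\T{O}'$ has the shape $\mathbb{R}^{H'\times W'\times M}$ promised in the preliminaries --- this holds because the spatial stage, and therefore the height/width relations \eqref{eq:heightandwidth}, are not altered.
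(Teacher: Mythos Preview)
Your proposal is correct and follows essentially the same approach as the paper: substitute the GEP factorization $\T{K}_{i,j,m,n}=\T{D}_{i,j,m}\cdot\mathbf{P}_{m,n}$ into \eqref{eq:standardconv}, then split the triple sum into the depthwise stage \eqref{eq_prelim:mobileconv_dw} and the pointwise stage \eqref{eq_prelim:mobileconv_pw}. Your write-up is in fact slightly more explicit than the paper's, which simply states the substituted expression and then presents the two split equations without spelling out the sum reordering and factoring.
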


\begin{proof}
\label{proof:theo1}
From the definition of \gep,
$\T{K}_{i,j,m,n} = \T{D}_{i,j,m} \cdot \mathbf{P}_{m,n}$.
We replace the kernel $\T{K}_{i,j,m,n}$ with the depthwise convolution kernel $\T{D}_{i,j,m}$ and the pointwise convolution kernel $\mathbf{P}_{m,n}$. 
\begin{align*}
\T{O}_{h',w',n} = \sum^{D}_{i=1} \sum^{D}_{j=1} \sum^{M}_{m=1} \T{D}_{i,j,m} \cdot \mathbf{P}_{m,n} \cdot \T{I}_{h_i,w_j,m}
\end{align*}
where $\T{I}_{h_i, w_j, m}$ is the $(h_i, w_j, m)$-th entry of the input.
We split the above equation into the following two equations.
\begin{align}
\T{O}'_{h',w',m} &= \sum^{D}_{i=1} \sum^{D}_{j=1} \T{D}_{i,j,m} \cdot \T{I}_{h_i,w_j,m} \label{mobileconv_dw}\\
\T{O}_{h',w',n} &= \sum^{M}_{m=1} \mathbf{P}_{m,n} \cdot \T{O}'_{h',w',m} \label{mobileconv_pw}
\end{align}
where $\T{O}'_{h',w',m} \in \mathbb{R}^{H' \times W' \times M}$ is an intermediate tensor. 
Note that \eqref{mobileconv_dw} and~\eqref{mobileconv_pw}
correspond to applying a depthwise convolution and a pointwise convolution, respectively.
Therefore, the output $\T{O}'_{h',w',m}$ is equal to the output after applying depthwise separable convolution, DPConv, used in MobilenetV1.
\end{proof}

\subsubsection{Pointwise-Depthwise-Pointwise Convolution Module (PDPConv)}
PDPConv, which is used in \cite{MobileNetV2,MobileNetV3}, can be understood as DPConv preceded with an additional pointwise convolution.
This module is equivalent to a layer consisting of
pointwise convolution followed by standard convolution expressed by GEP.

%

\subsubsection{Group-Depthwise-Group Convolution Module (GDGConv)}
GDGConv, which is used in Shufflenet (\cite{ShuffleNet}), consists of four sublayers, group pointwise convolution, channel shuffle, depthwise convolution, and another group pointwise convolution, as well as a shortcut.
We examine the relation between standard convolution and the last two convolutions of GDGConv using GEP as follows.
Let $g$ be the number of groups and $\T{K}^l \in \mathbb{R}^{D \times D \times \frac{M}{g} \times \frac{N}{g}}$ be the $l$th group standard convolution kernel.
Note that the input tensor is split into $g$ group tensors along the channel axis and each group standard convolution performs a convolution operation to its corresponding group tensor.
Then, the relation of $l$-th group standard convolution kernel $\T{K}^l \in \mathbb{R}^{D \times D \times \frac{M}{g} \times \frac{N}{g}}$ with regard to {$l$-th depthwise convolution kernel $\T{D}^l \in \mathbb{R}^{D \times D \times \frac{M}{g}}$} and $l$-th pointwise group convolution kernel $\mathbf{P}^l \in \mathbb{R}^{\frac{M}{g} \times \frac{N}{g}}$ is
\begin{align*}
\T{K}^l = \T{D}^l \odot_{E} \mathbf{P}^l \hspace{5mm}\text{s.t.} \hspace{5mm}
\T{K}^l_{i,j,m_g,n_g} = \T{D}^l_{i,j,m_g} \cdot \mathbf{P}^l_{m_g,n_g}
\end{align*}
where $m_g = 1,2,...,\frac{M}{g}$ and $n_g = 1,2,...,\frac{N}{g}$.
Each group standard convolution is equivalent to the combination of a depthwise convolution and a pointwise convolution, and thus
easily expressed with GEP. 
%

\subsubsection{PDPConv with split (PDPConv-split)}
PDPConv-split module is similar to PDPConv except that
1) the input channels are split into two branches,
2) the left branch undergoes the same convolutions as those of PDPConv which are expressed as GEP, and
3) the right branch is an identity connection.
ShufflenetV2 (\cite{ShuffleNetV2}) is constructed by adopting this module.

Since we can easily replace standard convolution with the modules mentioned above,
we compare \method with such modules in the experiment section.
Based on Theorem~\ref{theo:theo1}, we initialize kernels of DPConv similar to kernels of \method. Kernels $\T{D}$ and $\mathbf{P}$ of DPConv are fitted from the pretrained standard convolution kernel $\T{K}$; i.e., $\T{D}, \mathbf{P} = \argmin_{\T{D'}, \mathbf{P'}} ||\T{K}- \T{D'} \odot_{E} \mathbf{P'}||_F$.
In contrast to DPConv,
compressed models with PDPConv, GDGConv, and PDPConv-split fail to use GEP to initialize kernels and are trained from scratch, 
since they are not equivalent to standard convolution.
Therefore, the models with the three convolution modules are trained from scratch.

\subsection{Analysis}
\label{subsec:metrics}
We evaluate the compression and the computation reduction of \method and rank-$k$ \method.
All the analysis is based on one convolution layer.
The comparison of the numbers of parameters and FLOPs of \method and other competitors is
in Table~\ref{table:param_flops}. 

\begin{table}[!t]
	\centering
		{
	\caption{Numbers of parameters and FLOPs of \method and competitors. Symbols are described in Table~\ref{tab:symbol}. 
}
	\label{table:param_flops}
	\resizebox{0.8\textwidth}{!}{
	\begin{tabular}{c|cc}
		\toprule
		\textbf{Convolution} & \textbf{$\#$ of parameters} & \textbf{$\#$ of FLOPs}  \\
		\midrule
		\method   & $MN + D^2N$ & $HWMN + H'W'D^2N$ \\
		\method-branch & $\frac{1}{4}M^2 + \frac{1}{2}D^2M$ & $\frac{1}{4}HWM^2 + \frac{1}{2}HWD^2M$ \\
		DPConv & $MN + D^2M$ & $HWD^2M + H'W'MN$ \\
		PDPConv & $tM^2+tD^2M+tMN$ & $tHWM^2+tH'W'D^2M+tH'W'MN$ \\
		GDGConv & $\frac{1}{4}(\frac{MN}{g}+D^2N+\frac{N^2}{g})$ & $\frac{1}{4}(\frac{HWMN}{g}+H'W'D^2N+\frac{H'W'N^2}{g})$ \\
		PDPConv-split & $\frac{1}{2}(M^2 + D^2M)$ & $\frac{1}{2}HW(M^2 + D^2M)$ \\
		StConv-branch & $\frac{1}{4}D^2M^2$ &  $\frac{1}{4}HWD^2M^2$ \\
		Standard convolution & $D^2MN$ & $H'W'D^2MN$ \\
 		\bottomrule
	\end{tabular}}}
\end{table}

\subsubsection{\method}
We analyze the compression and the computation reduction rates of \method in Theorems~\ref{theo:cr} and~\ref{theo:crr}.
\begin{theorem}
\label{theo:cr}
Compression Rate ($CR$) of \method is given by
{
\begin{align*}
CR = \displaystyle{\frac{\text{\# of parameters in standard convolution}}{\text{\# of parameters in \method}}} = \frac{D ^2MN} {MN + D^2N}
\end{align*}}
where $D^2$ is the size of standard kernel, $M$ is the number of input channels, and $N$ is the number of output channels. \QEDB
\end{theorem}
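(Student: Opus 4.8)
The plan is to prove the identity by directly counting the free parameters in the two layers and taking their quotient, using Theorem~\ref{theo:theo2} to justify that \method and the standard convolution it replaces are genuinely comparable as a single layer. By that theorem, the \method layer is parameterized by a pointwise kernel $\mathbf{P}\in\mathbb{R}^{N\times M}$ and a depthwise kernel $\T{D}\in\mathbb{R}^{D\times D\times N}$, and it realizes the same map as a standard convolution with kernel $\T{K}=TT_{(1,2,4,3)}(\T{D}\odot_{E}\mathbf{P})\in\mathbb{R}^{D\times D\times M\times N}$, so ``number of parameters in standard convolution'' refers unambiguously to the entries of this $\T{K}$.

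First I would evaluate the numerator: $\T{K}$ is a $4$-order tensor of shape $D\times D\times M\times N$, hence has exactly $D\cdot D\cdot M\cdot N=D^{2}MN$ independently stored entries. Next I would evaluate the denominator: \method stores $\mathbf{P}$ and $\T{D}$ as two separate tensors with no shared or tied entries --- $\mathbf{P}$ is used only in the pointwise step \eqref{eq:falcon_pw1} and $\T{D}$ only in the depthwise step \eqref{eq:falcon_dw1} --- so the parameter count of \method is $|\mathbf{P}|+|\T{D}|=NM+D^{2}N$. Dividing the first count by the second yields $CR=\frac{D^{2}MN}{MN+D^{2}N}$, which is exactly the claim, consistent with the \method row of Table~\ref{table:param_flops}.

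There is essentially no obstacle here; this is a bookkeeping argument, and the only points worth stating explicitly are that the tensor transpose $TT_{(1,2,4,3)}$ in Theorem~\ref{theo:theo2} is a pure reindexing and therefore contributes no stored parameters of its own, and that the GEP construction $\T{D}\odot_{E}\mathbf{P}$ \emph{expands} entries rather than sharing them, so that the factored representation genuinely uses only $MN+D^{2}N$ degrees of freedom. Once these are noted the ratio is immediate, completing the proof.
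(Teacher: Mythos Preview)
Your proposal is correct and follows the same counting argument as the paper: the standard kernel $\T{K}\in\mathbb{R}^{D\times D\times M\times N}$ has $D^{2}MN$ parameters, while \method stores $\mathbf{P}\in\mathbb{R}^{N\times M}$ and $\T{D}\in\mathbb{R}^{D\times D\times N}$ with $MN+D^{2}N$ parameters, giving the stated ratio. Your additional remarks about Theorem~\ref{theo:theo2}, the transpose being parameter-free, and GEP not tying entries are sound but more detailed than the paper's own proof, which simply states the two counts and divides.
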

\begin{proof}
{
Standard convolution kernel has $D^2MN$ parameters.
\method includes pointwise convolution and depthwise convolution which requires $MN$ and $D^2N$ parameters, respectively.
Thus, the compression rate of \method is $CR = \frac{D^2MN}{MN + D^2N}$.
}
\end{proof}

\begin{theorem}
\label{theo:crr}
Computation Reduction Rate ($CRR$) of \method is described as:
{
\begin{align*}
CRR &= \displaystyle{\frac{\text{\# of FLOPs in standard convolution}}{\text{\# of FLOPs in \method}}}
= \frac{H'W'MD^2N}{HWMN + H'W'D^2N}
\end{align*}
}
where $H'$ and $W'$ are the height and the width of output, respectively, and $H$ and $W$ are the height and the width of input, respectively.
\end{theorem}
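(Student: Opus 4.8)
The plan is to mirror the counting argument used for the compression rate in Theorem~\ref{theo:cr}, but now tallying floating-point operations (counting multiply--accumulates; any constant factor between this and a raw FLOP count cancels in the ratio) instead of stored parameters. First I would count the cost of a standard convolution directly from Equation~\eqref{eq:standardconv}: producing a single output entry $\T{O}_{h',w',n}$ requires one multiply--accumulate for each triple $(i,j,m) \in \{1,\dots,D\}^2 \times \{1,\dots,M\}$, i.e. $D^2 M$ operations, and there are $H' W' N$ output entries, so a standard convolution layer uses $H' W' D^2 M N$ FLOPs, consistent with the last row of Table~\ref{table:param_flops}.

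Next I would count the FLOPs of \method using the two-step decomposition established in the proof of Theorem~\ref{theo:theo2}: the pointwise step of Equation~\eqref{eq:falcon_pw1} followed by the depthwise step of Equation~\eqref{eq:falcon_dw1}. The pointwise step forms the intermediate tensor $\T{O}' \in \mathbb{R}^{H \times W \times N}$; each of its $HWN$ entries is a sum over $m = 1,\dots,M$, costing $M$ operations, for a total of $HWMN$ FLOPs. The depthwise step then produces $\T{O} \in \mathbb{R}^{H' \times W' \times N}$; each of its $H'W'N$ entries is a sum over $(i,j) \in \{1,\dots,D\}^2$, costing $D^2$ operations, for a total of $H'W'D^2N$ FLOPs. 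Adding the two contributions gives $HWMN + H'W'D^2N$ FLOPs for \method, and dividing the standard-convolution count by the \method count yields the claimed expression for $CRR$.

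The only point that needs care --- and the closest thing to an obstacle --- is keeping track of which sub-convolution carries the spatial down-sampling induced by the stride and padding in Equation~\eqref{eq:heightandwidth}. The decomposition in Theorem~\ref{theo:theo2} places the pointwise convolution at the input resolution (it is indexed by $h_i, w_j$, hence evaluated over the $H \times W$ grid), whereas the stride collapses the $D^2$ receptive-field offsets during the depthwise step, so the depthwise output lives on the $H' \times W'$ grid; this is precisely why the two summands carry different spatial prefactors and matches the \method row of Table~\ref{table:param_flops}. Everything else is a routine count, so I would keep the write-up as short as the proof of Theorem~\ref{theo:cr}.
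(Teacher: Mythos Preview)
Your proposal is correct and follows essentially the same approach as the paper: a direct FLOP count for standard convolution yielding $H'W'D^2MN$, then separate counts of $HWMN$ for the pointwise step and $H'W'D^2N$ for the depthwise step of \method, followed by taking the ratio. The paper's own proof is terser (it justifies the spatial dimensions by noting that the pointwise sub-convolution uses kernel size $1$, stride $1$, and no padding, rather than by tracing the indices in Equations~\eqref{eq:falcon_pw1}--\eqref{eq:falcon_dw1}), but the substance is identical.
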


\begin{proof}
The standard convolution operation requires $H'W'D^2MN$ FLOPs (\cite{NVIDIA}).
\method includes pointwise convolution and depthwise convolution.
Pointwise convolution has kernel size $D=1$ with stride $s=1$ and no padding, so the intermediate tensor $\T{O}'$ has the same height and width as those of the input feature maps.
Thus, pointwise convolution needs $HWMN$ FLOPs.
Depthwise convolution has the number of input channel $M=1$, so it needs $H'W'D^2N$ FLOPs.
The total FLOPs of \method is $HWMN + H'W'D^2N$, thus the computation reduction rate of \method is $\displaystyle{CRR = \frac{H'W'D^2MN}{HWMN + H'W'D^2N}}$.
\end{proof}

\subsubsection{Rank-k \method}
We analyze the compression and computation reduction rates of rank-$k$ \method in Theorem~\ref{theo:rankK}.
\begin{theorem}
\vspace{-2mm}
\label{theo:rankK}
Compression Rate ($CR_k$) and Computation Reduction Rate ($CRR_k$) of rank-k \method are described as:
\begin{align*}
CR_k = \frac{CR}{k} \hspace{10mm} CRR_k = \frac{CRR}{k} \tag*{\QEDB}
\end{align*}
\end{theorem}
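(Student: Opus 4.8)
The plan is a direct parameter- and FLOP-counting argument that reuses Theorems~\ref{theo:cr} and~\ref{theo:crr}. By definition, rank-$k$ \method runs $k$ independent \method operations, each with its own pointwise kernel $\mathbf{P}^{(r)} \in \mathbb{R}^{N \times M}$ and depthwise kernel $\T{D}^{(r)} \in \mathbb{R}^{D \times D \times N}$, and then sums the $k$ resulting tensors before a single batch-normalization and ReLU. So the first step is to observe that the parameter count is additive across the $k$ branches: by Theorem~\ref{theo:cr} one \method branch uses $MN + D^2N$ parameters (pointwise plus depthwise), hence rank-$k$ \method uses $k(MN + D^2N)$ parameters. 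Dividing the standard-convolution count $D^2MN$ by this gives $CR_k = \frac{D^2MN}{k(MN+D^2N)} = \frac{1}{k}\,CR$.

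Second, I would repeat the same bookkeeping for FLOPs. Each \method branch costs $HWMN + H'W'D^2N$ FLOPs by the proof of Theorem~\ref{theo:crr}, so the $k$ branches together cost $k(HWMN + H'W'D^2N)$ FLOPs, and therefore $CRR_k = \frac{H'W'D^2MN}{k(HWMN+H'W'D^2N)} = \frac{1}{k}\,CRR$. The batch-normalization and ReLU applied to the summed result are identical to those already present in plain \method, so they do not enter the comparison and need no separate accounting.

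The only place needing a word of care — the closest thing to an obstacle — is the cost of summing the $k$ intermediate output tensors. This elementwise sum contributes $(k-1)H'W'N$ additions, which is a lower-order term relative to the $k\,H'W'D^2N$ multiply-adds of the depthwise stages and is dwarfed by the $k\,HWMN$ operations of the pointwise stages; consistent with the convention already used in Theorem~\ref{theo:crr} and in Table~\ref{table:param_flops}, this accumulation term is neglected, so the stated equality $CRR_k = CRR/k$ holds exactly under the same accounting scheme. I would state this explicitly so the reader sees the identity is exact only modulo the standard practice of ignoring the accumulation cost; with that caveat, the theorem follows immediately from the two displayed divisions.
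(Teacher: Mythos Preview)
Your proposal is correct and follows the same approach as the paper: both argue that rank-$k$ \method replicates the \method parameters and FLOPs $k$ times, so the compression and computation reduction rates are each divided by $k$. Your treatment is in fact more careful than the paper's two-line proof, since you explicitly account for (and justify neglecting) the $(k-1)H'W'N$ cost of the elementwise sum, which the paper silently ignores.
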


\begin{proof}
The numbers of parameters and FLOPs increase by k times since rank-k \method duplicates \method for k times.
Thus, the compression rate and the computation reduction rate are calculated as $CR_k = \displaystyle{\frac{CR}{k}}$ and $CRR_k = \displaystyle{\frac{CRR}{k}}$.
\end{proof}

\section{Experiments}

We validate the performance of \method through extensive experiments.
We aim to answer the following questions:

\begin{itemize*}
{
\item \textbf{Q1. Performance.}
Which method gives the best accuracy for a given compression and computation reduction rate?

\item \textbf{Q2. Ablation Study.}
How do the initialization and the alignment of depthwise separable convolution affect the performance of \method?
}

%

\item \textbf{Q3. Rank-$k$ \method.}
How do the accuracy, the number of parameters, and the number of FLOPs change
as the rank $k$ increases in \method?

\item \textbf{Q4. Comparison with Tensor decomposition.}
How effectively does \method compress CNN models compared to CP and Tucker decomposition methods?

\hide{
\item \textbf{Q5. Effectiveness of Initialization.}
How does the initialization affect the fine-tuning step?}

\end{itemize*}

\begin{table}[t]
\caption{Datasets.}
\centering
\label{tab:data}
\resizebox{0.8\textwidth}{!}{
\begin{tabular}{lrrrr}
\toprule
\textbf{dataset} & \textbf{\# of classes} & \textbf{input size} & \textbf{\# of train} & \textbf{\# of test} \\
\midrule
CIFAR-10\tablefootnote{\url{https://www.cs.toronto.edu/~kriz/cifar.html}\label{r}}  & 10 & $32 \times 32 \times 3$ & $10 \times 6000$ & $10000$ \\
CIFAR-100$^{\ref{r}}$ & 100 & $32 \times 32 \times 3$ & $100 \times 600$ & $10000$ \\
SVHN\tablefootnote{\url{http://ufldl.stanford.edu/housenumbers/}} & 10 & $32 \times 32$ & $73257$ & $26032$ \\
ImageNet\tablefootnote{\url{http://www.image-net.org}} & 1000 & $224 \times 224 \times 3$ & $1.2 \times 10^6$ & $150000$ \\
\bottomrule
\end{tabular}
}
\end{table}

\subsection{Experimental Setup}
\label{subsec: exp_setup}


We construct all models using Pytorch framework.
All the models are trained and tested on a machine with GeForce GTX 1080 Ti GPU.
We perform image classification task on four famous datasets in Table~\ref{tab:data}: CIFAR10, CIFAR100, SVHN, and ImageNet.

\subsubsection{Prediction Models}
For CIFAR10, CIFAR100, and SVHN datasets, we choose VGG19 and ResNet34 to evaluate the performance.
We shrink the sizes of both models since the sizes of these three datasets are smaller than that of Imagenet.
On both models, we replace all standard convolution layers (except for the first convolution layer) with those of \method or other competitors in order to compress and accelerate the model.
For ImageNet, we choose VGG16\_BN (VGG16 with batch normalization after every convolution layer) and ResNet18.
We use the pretrained model from Pytorch model zoo as the baseline model with standard convolution. 

In VGG19, we reduce the number of fully connected layers and the number of features in fully connected layers: three large fully connected layers (4096-4096-1000) in VGG19 are replaced with two small fully connected layers (512-10 or 512-100).
In ResNet34, we remove the first $7 \times 7$ convolution layer and max-pooling layer
since the input size ($32 \times 32$) of these datasets is smaller than the input size ($224 \times 224$) of ImageNet.

\subsubsection{Competitors and Evaluation}
\label{sec:exp:setup:competitor}

We compare \method and \method-branch with four convolution units consisting of depthwise convolution and pointwise convolution: DPConv, PDPConv, GDGConv, PDPConv-split.
%
To evaluate the effectiveness of fitting depthwise and pointwise convolution kernels to standard convolution kernel,
we build GEP-in which is DPConv where kernels $\T{D}$ and $\mathbf{P}$ are fitted from the pretrained standard convolution kernel $\T{K}$; i.e., $\T{D}, \mathbf{P} = \argmin_{\T{D'}, \mathbf{P'}} ||\T{K}- \T{D'} \odot_{E} \mathbf{P'}||_F$.
We also compare \method with two tensor decomposition methods, CP and Tucker.
We take each standard convolution layer (StConv) as a unit, and replace StConv with those from \method or other competitors.
We evaluate the classification accuracy, the number of parameters in the model, and the number of FLOPs needed for forwarding one image.

\subsubsection{Optimization}

For kernel fitting of \method, we use the AdamW optimizer.
We set the learning rate to $0.05$ for CIFAR10, and $0.001$ for the remaining datasets.
We choose the maximum number of iterations between $600$ to $1,000$.
%
For training and fine-tuning of all methods, we use the SGD optimizer.
The best learning rate is chosen among $0.5$, $0.1$, and $0.01$.
Weight decay is set to $0.0001$ and momentum is set to $0.9$. For CIFAR10, CIFAR100, and SVHN, the models are trained and fine-tuned for $350$ epochs. At epochs $150$ and $250$, the learning rate is decreased by a factor of $10$. For ImageNet, the models are trained and fine-tuned for $90$ epochs;  the learning rate is decreased by a factor of $10$ at epochs $30$ and $60$.

\subsubsection{Compression Methods}

The hyperparameter settings for each compression method is as follows.

\textbf{\method.}
When replacing StConv with \method, we use the same setting as that of StConv.
That is, if there are BN and ReLU after StConv, we add BN and ReLU at the end of \method; if there is only ReLU after StConv, we add only ReLU at the end of \method.
This is because \method is initialized by approximating the StConv kernel using GEP;
using the same setting for BN and ReLU as StConv is more effective for \method to approximate the StConv.
%
%
%
%
We fit the pointwise convolution kernel and the depthwise convolution kernel of \method to the pretrained standard convolution kernel using GEP,
similarly to GEP-in.
%
%
Rank-$k$ \method uses the same fitting method.

\textbf{DPConv.}
DPConv has the most similar architecture as \method among competitors, and thus DPConv has nearly the same number of parameters as that of \method.
As in \method, the existence of BN and ReLU at the end of DPConv depends on that of StConv.

\textbf{PDPConv.}
In PDPConv,
we adjust the numbers of parameters and FLOPs by changing the expansion ratio $t$, which is represented as `PDPConv-$t$'.
We choose $t=0.5$ as the baseline PDPConv to compare with \method, since two pointwise convolutions impose lots of parameters and FLOPs to PDPConv.

\textbf{GDGConv.}
In GDGConv,
we adjust the numbers of parameters and FLOPs by changing the width multiplier $\alpha$ (\cite{MobileNet}) and the number $g$ of groups, which is represented as `GDGConv $\alpha \times$(g=$g$)'.
Note that the width multiplier is used to adjust the number $M$ of input channels and the number $N$ of output channels of a convolution layer; if the width multiplier is $\alpha$, the numbers of input and output channels become $\alpha M$ and $\alpha N$, respectively.
We also observe that GDGConv does not cooperate well with ResNet: ResNet34 with GDGConv does not converge.
We suspect that residual block and GDGConv may conflict with each other because of redundant residual connections:
a gradient may not find the right path towards previous layers.
For this reason, we delete the shortcuts of all residual blocks in ResNet34 when using GDGConv.

\textbf{PDPConv-split.}
In PDPConv-split, we also adjust the number of parameters and FLOPs by changing the width multiplier $\alpha$, which is represented as 'PDPConv-split $\alpha \times$'.
Other operations of PDPConv-split stay the same as in \cite{ShuffleNetV2}.

\textbf{CP and Tucker.}
The ranks for CP and Tucker decomposition methods are determined from Variational Bayesian Matrix Factorization (\cite{VBMF}).

\begin{figure*} [t]
	\centering
	\subfloat{\includegraphics[width=0.7\textwidth]{FIG/pythoncode/LEGEND1.pdf}}\vspace{-4mm} \\
	\setcounter{subfigure}{0}
	\subfloat[VGG19-CIFAR10] {\includegraphics[width=0.27\textwidth]{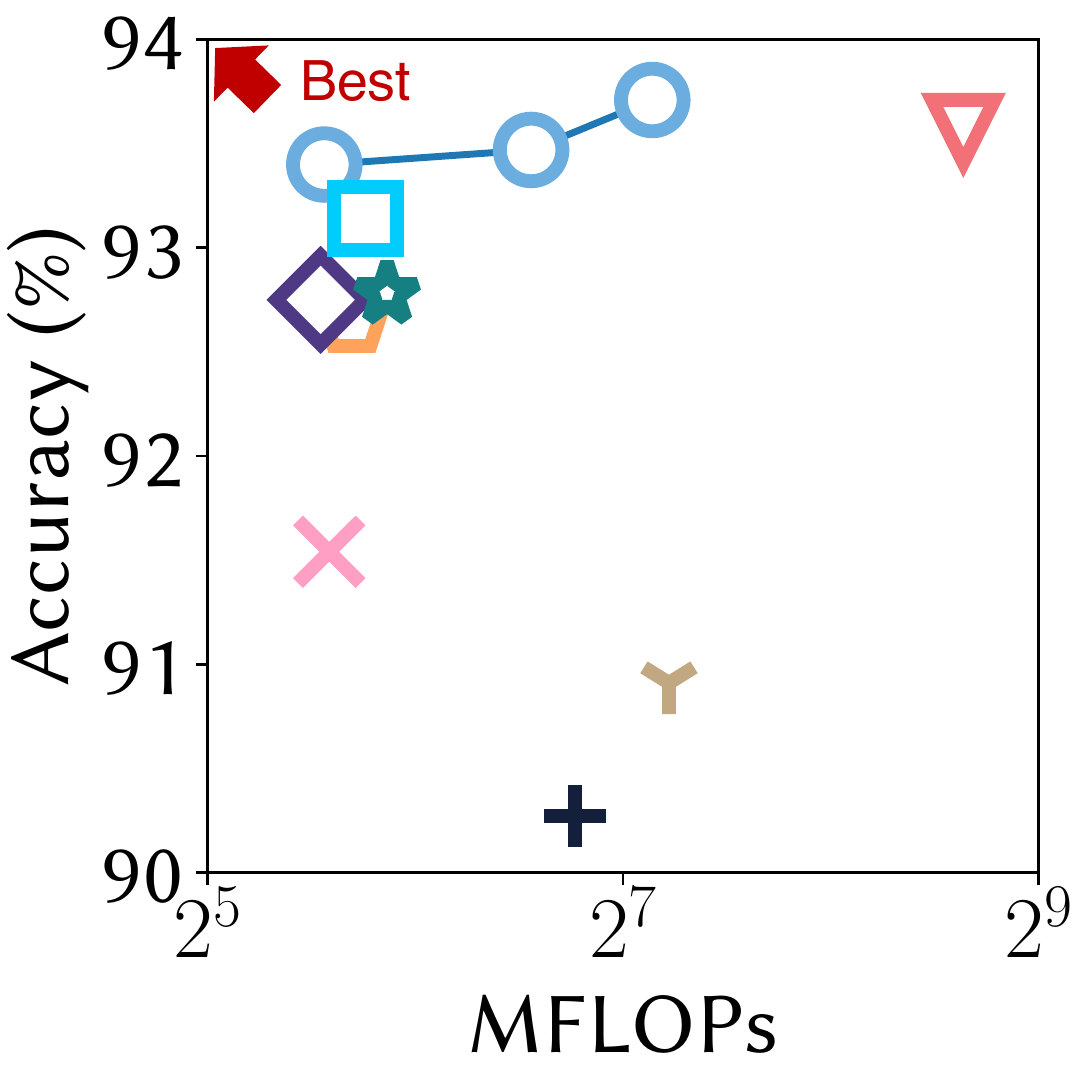}\label{fig:vgg19_cifar10_flops}}
	\subfloat[VGG19-SVHN] {\includegraphics[width=0.275\textwidth]{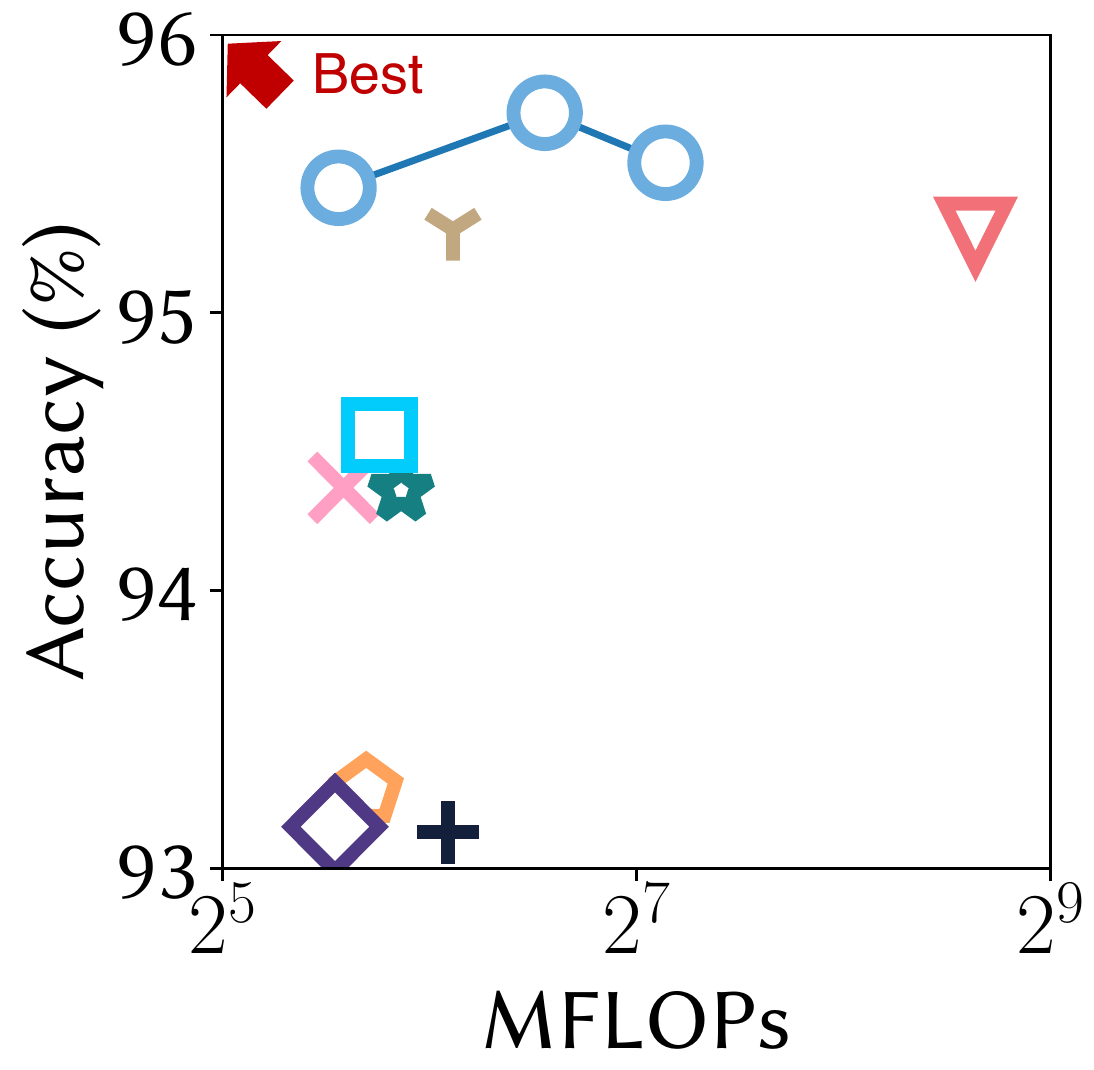}\label{fig:vgg19_svhn_flops}}
	\subfloat[VGG16-ImageNet] {\includegraphics[width=0.27\textwidth]{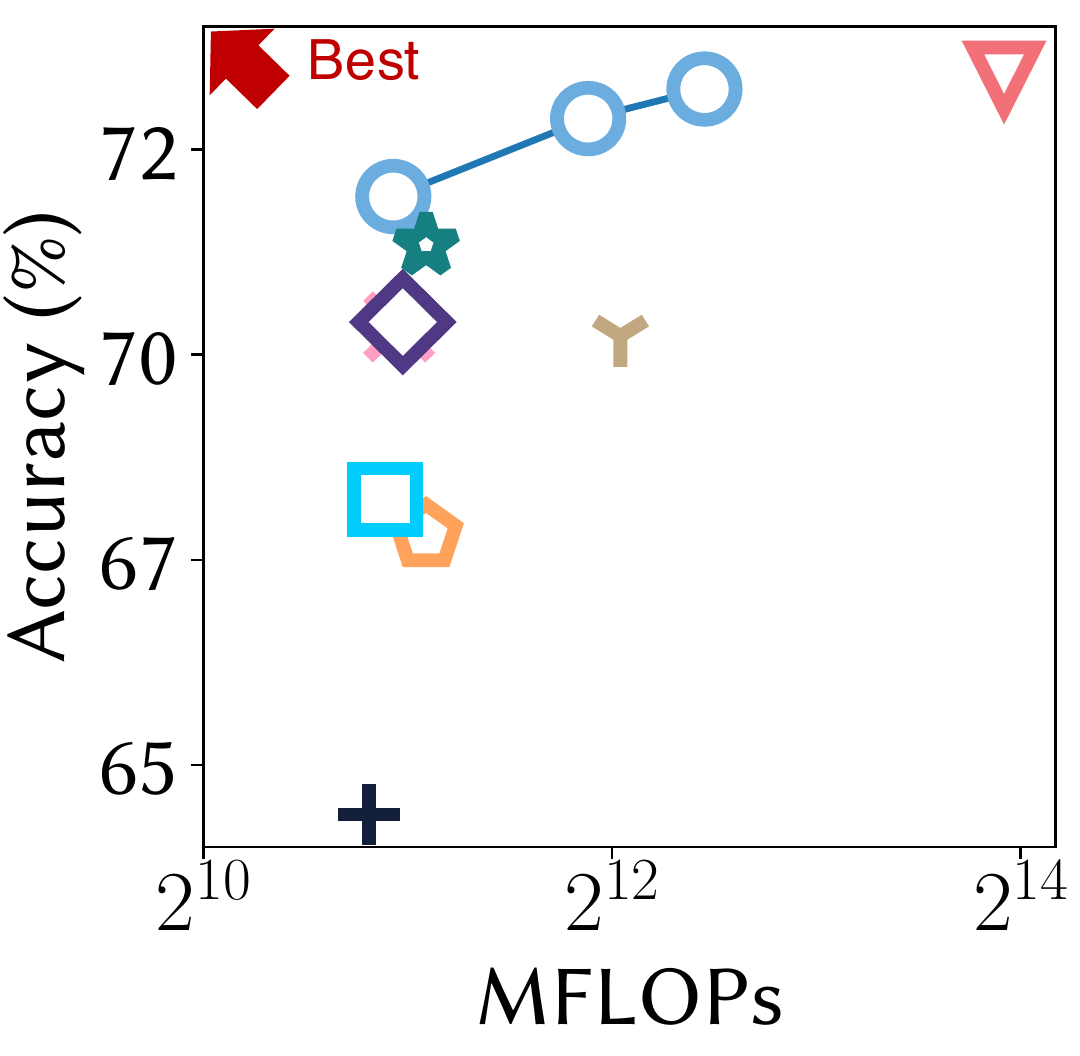}\label{fig:vgg19_imagenet_flops}} \\ \vspace{-2mm}
	\subfloat[ResNet34-CIFAR10] {\includegraphics[width=0.27\textwidth]{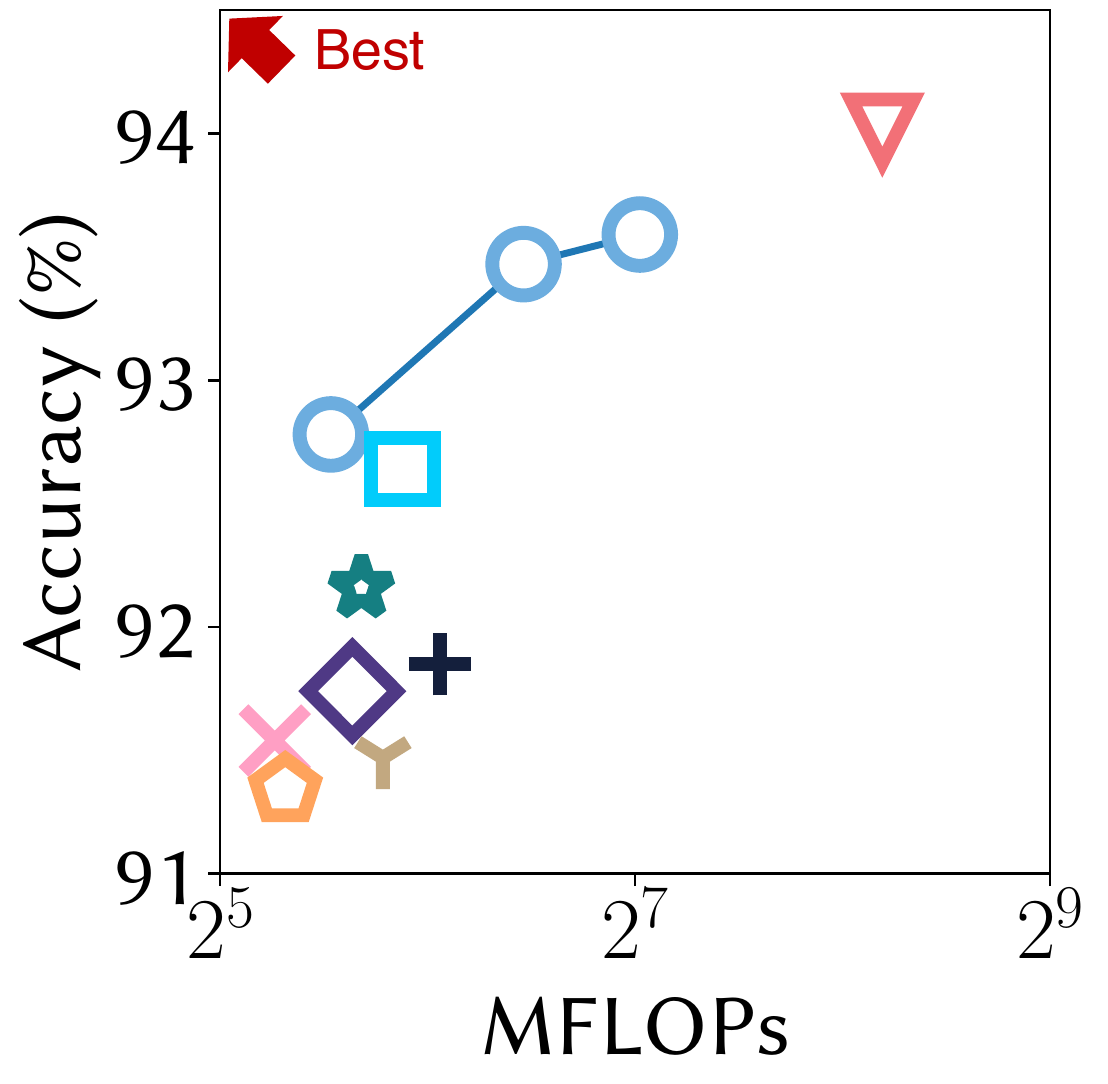}\label{fig:resnet_cifar10_flops}}
	\subfloat[ResNet34-SVHN] {\includegraphics[width=0.275\textwidth]{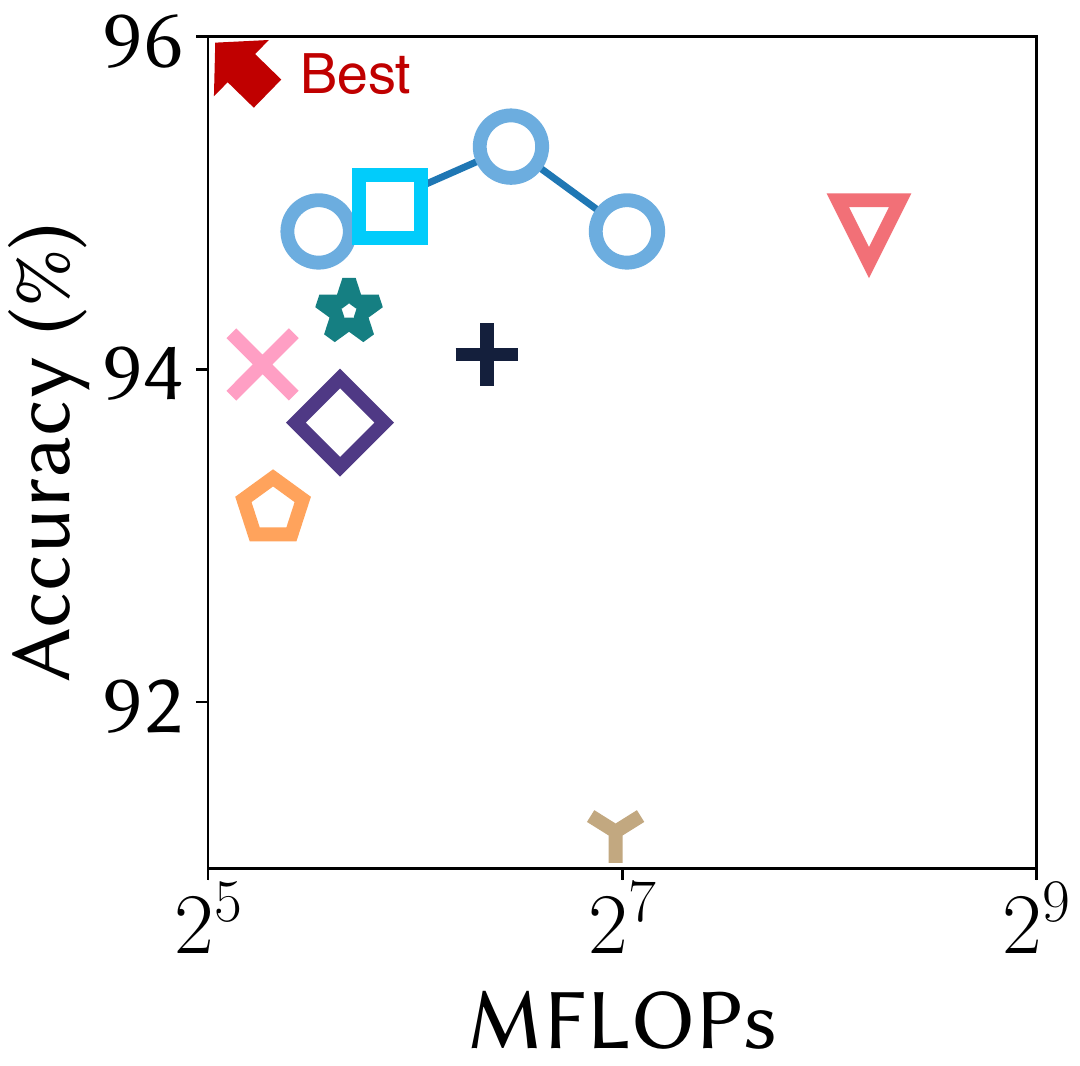}\label{fig:resnet_svhn_flops}}
	\subfloat[ResNet18-ImageNet] {\includegraphics[width=0.27\textwidth]{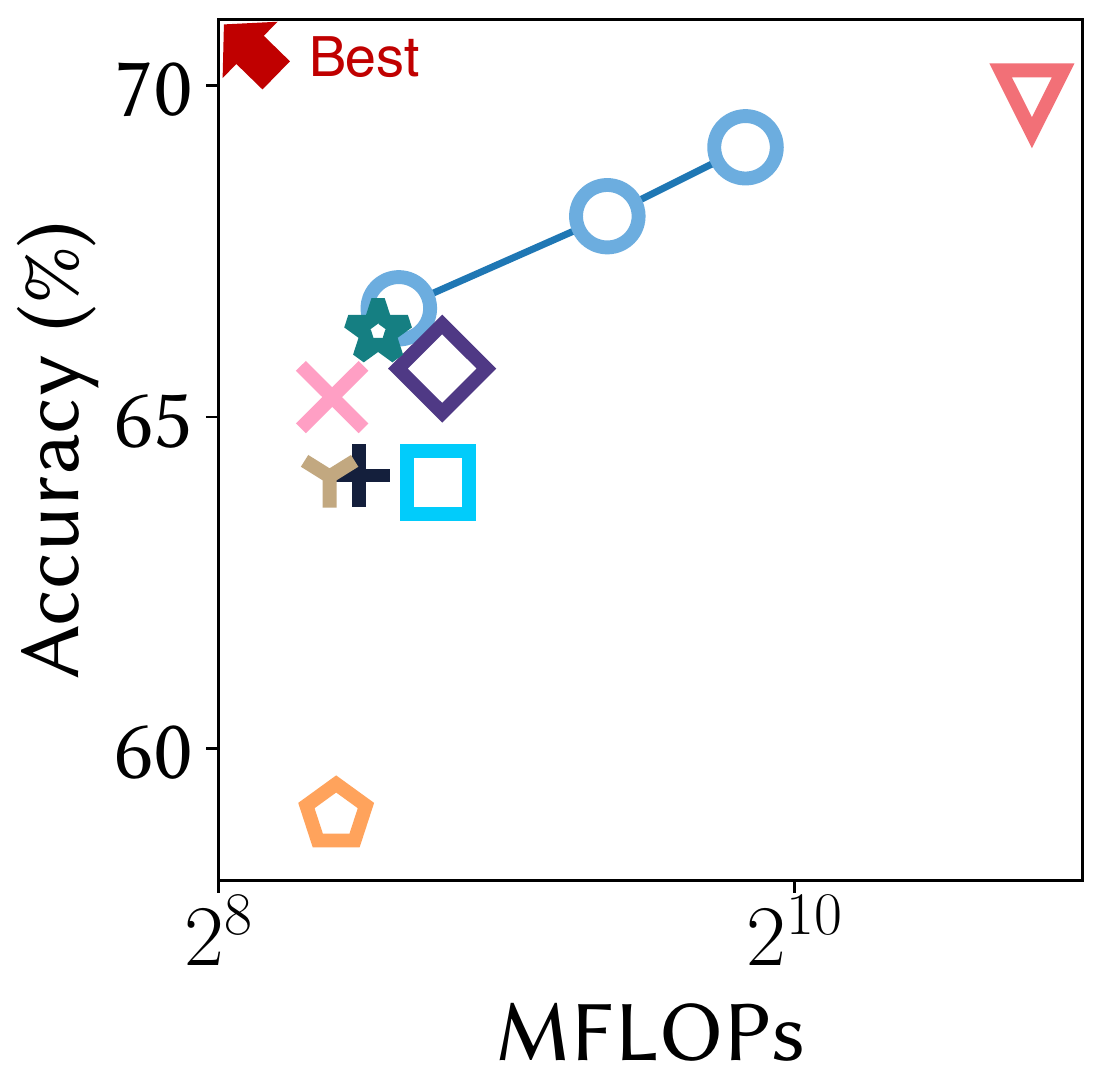}\label{fig:resnet_imagenet_flops}}
	\caption{Accuracy w.r.t. number of parameters on different models and datasets.
The three blue circles correspond to rank-1, 2, and 3 \method (from left to right order), respectively.
\method provides the best accuracy for a given number of parameters.
}
\label{fig:acc_flops}
\end{figure*}

\setlength{\tabcolsep}{2.1pt}
\begin{table}[t]
\centering
\caption{
\method and \method-branch give the best accuracy for similar number of parameters and FLOPs.
Bold font indicates the best accuracy among competing compression methods.
}
\label{tab:perf_FALCON}
\adjustbox{width=\textwidth}{
\begin{tabular}{l|rrr|l|rrr}
\toprule
\multirow{2}{*}{ConvType}       & \multicolumn{3}{c|}{VGG19-CIFAR100} & \multirow{2}{*}{ConvType}  & \multicolumn{3}{c}{ResNet34-CIFAR100} \\
  				& Accuracy 				& \# param    	& \# FLOPs & &  Accuracy 	& \# param	& \# FLOPs  	\\
\midrule
StConv           				& 72.10\%				& 20.35M      	& 398.75M	& StConv           				& 73.94\%				& 21.34M      	& 292.57M	\\
\midrule
\method           				& 72.01\%  				& 2.61M  		& 47.28M 	& \method           				& \textbf{71.83\%}  	& 2.67M  		& 46.38M	\\
\method without fitting          				& 70.80\%  				& 2.61M  		& 47.28M 	& \method without fitting		& 71.80\%  	& 2.67M  		& 46.38M			\\
\method-branch 1.75$\times$ 	& \textbf{73.05\%}		& 2.68M  		& 54.21M	& \method-branch 1.625$\times$    & 70.26\%				& 2.54M  		& 58.93M 	\\
GEP-in					& 68.29\%		& 2.61M 	& 46.46M	 & GEP-in 					& 66.88\%		& 2.67M	& 38.45M	\\
DPConv              				& 68.18\% 				& 2.61M      	& 48.07M	& DPConv              				& 66.30\%  				& 2.67M       	& 38.45M	\\
PDPConv-0.5 				& 72.50\%  				& 2.71M       	& 51.85M & PDPConv-0.5 				& 65.00\%  				& 2.59M       	& 39.83M		\\
GDGConv 2$\times$(g=2) 		& 72.73\%		  		& 2.79M 		& 46.71M 	& GDGConv 2$\times$(g=2) 	  	& 68.97\% 				& 3.17M 		& 49.88M		\\
PDPConv-split 1.375$\times$ 	& 72.32\%  				& 2.91M  		& 58.29M	& PDPConv-split 1.375$\times$  	& 67.38\%  				& 3.04M   	 	& 51.36M		\\
\midrule
\tucker & $57.32$\% & $4.10$M  & $84.89$M  & \tucker & $58.10$\% & $2.66$M & $54.03$M \\
\cp & $61.08$\% & $2.63$M & $68.71$M  & \cp & $65.82$\% & $2.66$M & $42.58$M   \\
\bottomrule
\end{tabular}}
\adjustbox{width=\textwidth}{
\begin{tabular}{l|rrr|l|rrr}
\toprule
\multirow{2}{*}{ConvType}       & \multicolumn{3}{c|}{VGG19-SVHN} & \multirow{2}{*}{ConvType}  & \multicolumn{3}{c}{ResNet34-SVHN} \\
  				& Accuracy 				& \# param    	& \# FLOPs & &  Accuracy 	& \# param	& \# FLOPs  	\\
\midrule
StConv           				& 95.28\%  				& 20.30M      	& 398.70M	& StConv         					& 94.83\%  				& 21.29M      	& 292.52M	\\
\midrule
\method           				& \textbf{95.45\%}  	& 2.56M  		& 47.23M & \method           				& 94.83\%				& 2.63M  		& 46.33M		\\
\method without fitting          				& {94.51\%}  	& 2.56M  		& 47.23M  & \method without fitting          				& 94.75\%				& 2.63M  		& 46.33M		\\
\method-branch 1.75$\times$     & 94.56\%				& 2.64M  		& 54.17M & \method-branch 1.625$\times$    & \textbf{94.98\%}		& 2.47M  		& 58.86M		\\
GEP-in 					& 94.99\%		& 2.56M	& 46.41M	 & GEP-in 					& 94.06\%		& 2.62M	& 38.41M	\\
DPConv              				& 94.37\%  				& 2.56M      	& 48.02M	& DPConv              				& 94.03\%	  			& 2.62M       	& 38.41M		\\
PDPConv-0.5 				& 93.28\%  				& 2.67M       	& 51.80M	& PDPConv-0.5 				& 93.16\%  				& 2.55M       	& 39.78M	\\
GDGConv 2$\times$(g=2) 	  	& 93.15\% 				& 2.74M     	& 46.66M	& GDGConv 2$\times$(g=2) 	  	& 93.68\% 				& 3.08M    		& 49.78M	\\
PDPConv-split 1.375$\times$ 	& 94.36\%  				& 2.86M       	& 58.24M	& PDPConv-split 1.375$\times$ 	& 94.35\%  				& 2.98M 		& 51.30M		\\
\midrule
\tucker & $95.30$\% & $2.35$M  & $69.28$M & \tucker & $91.22$\% & $2.88$M  & $125.23$M\\
\cp & $93.13$\% & $2.54$M  & $68.15$M & \cp & $94.09$\% & $2.63$M  & $81.34$M \\

\bottomrule
\end{tabular}}
\adjustbox{width=\textwidth}{
\begin{tabular}{l|rrrr|l|rrrr}
\toprule
\multirow{2}{*}{ConvType}       & \multicolumn{4}{c|}{VGG16\_BN-ImageNet} & \multirow{2}{*}{ConvType}  & \multicolumn{4}{c}{ResNet18-ImageNet} \\
  				& Top-1 & Top-5 & \# param    	& \# FLOPs & &  Top-1 & Top-5 	& \# param	& \# FLOPs  	\\
\midrule
StConv
 & 73.37\% 	& 91.50\% 	& 138.37M 	& 15484.82M	& StConv
 & 69.76\%  	& 89.08\% 	& 11.69M 	& 1814.07M	\\
\midrule
\method 					
 & \textbf{71.93}\% 	& \textbf{90.57\%} 	& 125.33M 	& 1950.75M & \method
 & \textbf{66.64\%}  	& 87.09\%	& 1.97M  	& 395.40M	\\
\method without fitting 					
 & 71.65\% 	& {90.47\%} 	& 125.33M 	& 1950.75M	&
 \method without fitting
 & {66.19\%}  	& 86.86\%	& 1.97M  	& 395.40M\\
\method-branch 1.5$\times$ 					
& 68.24\% 	& 88.51\% 	& 125.30M 	& 1898.39M	& \method-branch 1.375$\times$ 					
& 64.01\% 	& 85.16\% 	& 1.91M 	& 434.44M\\
GEP-in               		
 & 70.98\%  	& 90.19\%	& 125.33M	& 1910.56M	& GEP-in
 & 66.21\%  	& 86.93\%	& 1.96M    	& 336.81M	\\
DPConv              			
 & 70.34\%  	& 89.71\%	& 125.33M	& 1989.49M & DPConv
 & 65.30\%  	& 86.30\%	& 1.96M    	& 336.81M		\\
PDPConv-0.5 		 	
 & 67.80\%  	& 87.90\%	& 125.44M 	& 2180.49M & PDPConv-0.5
 & 58.99\% 		& 81.55\%	& 1.90M     & 340.06M	\\
GDGConv 2$\times$(g=2)
 & 70.40\%		& 89.84\% 	& 125.77M 	& 2014.73M & GDGConv 2$\times$(g=2) 	
 & 65.73\% 		& 86.75\%	& 2.22M		& 438.89M	\\
PDPConv-split 1.25$\times$
 & 71.34\%		& 90.34\% 	& 125.57M 	& 2180.65M & PDPConv-split 1.1875$\times$
 & 66.29\% 		& \textbf{87.32\%}	& 2.01M		& 376.15M	\\
 \midrule
\tucker & $70.23$\% & $89.61$\% & $126.30$M &  $4214.26$M  & \tucker & $64.10$\% & $85.58$\% & $2.01$M & $334.73$M \\
\cp & $64.40$\% & $85.58$\% & $124.79$M &  $1796.79$M & \cp & $64.10$\% & $85.51$\% & $1.97$M  & $359.80$M \\
\bottomrule
\end{tabular}}
\end{table}

\subsection{Performance (Q1)}
\label{subsec:performance}
{We evaluate the performance of \method in terms of accuracy, compression rate, and the amount of computation.}

\textbf{Accuracy vs. Compression.}
We evaluate the accuracy and the compression rate of \method and competitors.
Table~\ref{tab:perf_FALCON} shows the results on four image datasets.
Note that \method or \method-branch provides the highest accuracy
in most cases (5 out of 6) 
while using a similar or smaller number of parameters than competitors.
Specifically, \method and \method-branch achieve up to $8.61\times$ compression rates with minimizing accuracy drop compared to that of the standard convolution (StConv).
Figure~\ref{fig:acc_size} shows the tradeoff between accuracy and the number of parameters.
Note that \method and \method-branch show the best tradeoff (closest to the ``best" point) between accuracy and compression rate, giving the highest accuracy with similar compression rates.

\textbf{Accuracy vs. Computation.}
We evaluate the accuracy and the amount of computation of \method and competitors.
We use the number of multiply-add floating point operations (FLOPs) needed for forwarding one image to a model as the metric of computation.
Table~\ref{tab:perf_FALCON} also shows
the accuracies and the number of FLOPs of methods on four image datasets.
Note that \method or \method-branch provides the highest accuracy
in most cases 
while using similar FLOPs as competitors do.
Compared to StConv, \method and \method-branch achieve up to $8.44\times$ FLOPs reduction across different models on different datasets.
%
%
Figure~\ref{fig:acc_flops} shows the tradeoff between accuracy and the number of FLOPs.
Note that \method and \method-branch show the best tradeoff (closest to the "best" point) between accuracy and computation, giving the highest accuracy with a similar number of FLOPs.


\subsection{Ablation study (Q2)}
\label{subsec:ablation}
We perform an ablation study on two components:
1) fitting the depthwise and the pointwise kernels to the trained standard convolution kernel,
and
2) the order of depthwise and pointwise convolutions.
From Table~\ref{tab:perf_FALCON}, we have two observations.
First, fitting kernels using GEP gives better accuracy:
\method and GEP-in give better accuracy than \method without fitting and DPConv, respectively.
Second, with a similar number of parameters,
\method and \method without fitting always result in better accuracy than GEP-in and DPConv, respectively; it shows that applying pointwise convolution first and then depthwise convolution is better than the other way.
%
%
These observations show
that our main ideas of fitting kernels using GEP and careful ordering of convolutions are key factors for the superior performance.

\setlength{\tabcolsep}{2.1pt}
\begin{table}[t]
\centering
\caption{
Rank-$k$ \method further improves accuracy while sacrificing a bit of compression and computation reduction.
}
\label{tab:rank_perf}
\adjustbox{width=\textwidth}{
\begin{tabular}{l|rrrrr|rrrrr}
\toprule
\multirow{2}{*}{ConvType}       & \multicolumn{5}{c|}{VGG19-CIFAR100} & \multicolumn{5}{c}{ResNet34-CIFAR100} \\
  				& Accuracy 				& \# param  &   	& \# FLOPs &  &  Accuracy 	& \# param & 	& \# FLOPs &  	\\
\midrule
StConv    & $72.10$\%  & $20.35$M  & 			   & $398.75$M  		&	& $73.94$\%  & $21.34$M       & 			     & $292.57$M		 	&		\\
\midrule
\method-k1 & $72.01$\%  & $2.61$M & ($7.80\times$)  & $47.28$M		& ($8.43\times$) & $71.83$\%  & $2.67$M & ($7.99\times$)  & $46.38$M		& ($6.31\times$)\\
\method-k2 & $73.71$\%  & $4.88$M & ($4.17\times$)  & $94.24$M		& ($4.23\times$) &  $72.94$\%  & $5.08$M & ($4.20\times$)  & $88.25$M		& ($3.32\times$)			\\
\method-k3 & $73.41$\% & $7.16$M & ($2.84\times$)  & $141.21$M		& ($2.82\times$) &  $72.85$\%  & $7.49$M & ($2.85\times$)  & $130.13$M		& ($2.25\times$)	\\
\bottomrule
\end{tabular}}
\adjustbox{width=\textwidth}{
\begin{tabular}{l|rrrrr|rrrrr}
\toprule
\multirow{2}{*}{ConvType}       & \multicolumn{5}{c|}{VGG19-SVHN} & \multicolumn{5}{c}{ResNet34-SVHN} \\
  				& Accuracy 				& \# param  &   	& \# FLOPs &  &  Accuracy 	& \# param & 	& \# FLOPs &  	\\
\midrule
StConv    & 95.28\%  & 20.30M & 			     & 398.70M		&	& 94.83\%  & 21.29M       &   			     & 292.52M			&				\\
\midrule
\method-k1 & 95.45\%  & 2.56M & (7.93$\times$)  & 47.23M		& (8.44$\times$)	& 94.83\%  & 2.63M & (8.10$\times$)  & 46.33M 		& (6.31$\times$) 	\\
\method-k2 & 95.72\%  & 4.84M & (4.19$\times$)  & 94.20M		& (4.23$\times$)	& 95.34\%  & 5.04M & (4.22$\times$)  & 88.21M		& (3.32$\times$) 	\\
\method-k3 & 95.54\%  & 7.11M & (2.86$\times$)  & 141.16M		& (2.82$\times$) & 94.83\%  & 7.45M & (2.86$\times$)  & 130.08M		& (2.25$\times$)	\\
\bottomrule
\end{tabular}}
\adjustbox{width=\textwidth}{
\begin{tabular}{l|rrrrrr|rrrrrr}
\toprule
\multirow{2}{*}{ConvType}       & \multicolumn{6}{c|}{VGG16\_BN-ImageNet} & \multicolumn{6}{c}{ResNet18-ImageNet} \\
  				& Top-1 & Top-5 	& 	 \# param  &   	& \# FLOPs &  &  Top-1 & Top-5 	& \# param & 	& \# FLOPs &  	\\
\midrule
StConv 		& 73.37\% 	& 91.50\% 	& 138.37M 	& 	& 15484.82M 	&	& 69.76\%  	& 89.08\% 	& 11.69M 	& 	& 1814.07M 	&	\\
\midrule
\method-k1	& 71.93\% 	& 90.57\% 	& 125.33M 	& (1.10$\times$)	& 1950.75M 	& (7.94$\times$)& 66.64\%  	& 87.09\%	& 1.97M 		& (5.93$\times$) 	& 395.40M		& (4.59$\times$) \\
\method-k2	& 72.88\% 	& 91.19\% 	& 127.00M 	& (1.09$\times$)	& 3777.86M 	& (4.10$\times$) & 68.03\%  	& 88.26\%	& 3.22M  	& (3.63$\times$) 	& 653.00M		& (2.78$\times$)	\\
\method-k3	& 73.24\% 	& 91.54\%	& 128.68M	& (1.08$\times$)& 5604.97M 	& (2.76$\times$) & 69.07\%  	& 88.56\%	& 4.48M  	& (2.61$\times$) 	& 910.61M		& (1.99$\times$)	\\
\bottomrule
\end{tabular}}
\end{table}

\subsection{Rank-k \method (Q3)}
\label{subsec:rank_perf}

We evaluate the performance of rank-$k$ \method by increasing $k$ and monitoring the changes in the numbers of parameters
and FLOPs.
In Table~\ref{tab:rank_perf}, we observe three trends as $k$ increases:
1) the accuracy becomes higher, 
2) the number of parameters increases,
and
3) the number of floating point operations (FLOPs) increases.
Although the $k$ that provides the best tradeoff of rank and compression/computation reduction varies,
rank-$k$ \method improves the accuracy of \method in all cases.
Especially, we note that rank-$k$ \method often results in even higher accuracy than the standard convolution, while using smaller number of parameters and FLOPs.
For example, rank-$3$ \method applied to VGG19 on CIFAR100 dataset shows $1.31$ percentage point higher accuracy compared to the standard convolution, with $2.8\times$ smaller number of parameters and $2.8\times$ smaller number of FLOPs.
Thus, rank-$k$ \method is a versatile method to further improve the accuracy of \method while sacrificing a bit of compression and computation reduction.

\subsection{\method vs. Tensor decomposition (Q4)}
\label{subsec:td}

Since \method can be thought of as a variant of tensor decomposition,
we compare \method with CP and Tucker decomposition methods.
We compress standard convolution kernels using CP and Tucker,
and set the number of parameters and FLOPs to be approximately equal to those of \method.
We have two main observations from
Figure~\ref{fig:acc_size}, Figure~\ref{fig:acc_flops}, and
Table~\ref{tab:perf_FALCON}.
First, \method outperforms the two tensor decomposition methods for all datasets.
Second, \method has a better model capacity than those of CP and Tucker, for a given number of parameters and FLOPs.
CP and Tucker give poor accuracy compared to the standard convolution for CIFAR100,
while the accuracy gap becomes smaller for CIFAR10 and SVHN.
Note that CIFAR100 is a more complex dataset with more classes compared to those of CIFAR10 and SVHN, while the numbers of training instances are similar;
CP and Tucker lack model capacity to handle CIFAR100.
On the other hand, \method consistently shows the best accuracy due to its rich model capacity for handling convolutions.

\section{Related Work}
Over the past several years,
a lot of studies focused on compressing and accelerating deep neural networks to reduce model size, running time, and energy consumption.

Deep neural networks are often over-parameterized.
Weight-sharing (\cite{SoftWeight-Sharing,HashedNet,DeepCompression,TowardsTheLimitOfNetworkQuantization,VQ,FD,cho2020peakd}) is a common compression method which stores only assignments and centroids of weights.
While using the model, weights are loaded according to assignments and centroids.
Pruning (\cite{PruningWeightsAndConnections,PruningFilters,GuoO020,FrankleC19,ZhuangTZLGWHZ18}) aims at removing useless weights or setting them to zero.
Although weight-sharing and pruning can significantly reduce the model size,
they are not efficient in reducing the amount of computation.
Quantizing (\cite{BinaryConnect,BNN,LAB,TTQ,EVol}) the model into binary or ternary weights reduces model size and computation simultaneously:
replacing arithmetic operations with bit-wise operations remarkably accelerates the model.
Layer-wise approaches are also employed to efficiently compress models.
A typical example of such approaches is low-rank approximation (\cite{TuckerDecomposition,CPDecomposition,TTDecomposition,YooCKK19});
it treats the weights as a tensor and uses general tensor approximation methods like Tucker and CP to compress the tensor.
To reduce computation, approximation methods should be carefully chosen, since some of approximation methods may increase computation of the model.
A reinforcement learning based framework is also proposed in \cite{lee2020auber}. 

{A recent major trend is to design a brand new architecture that is small and efficient using depthwise separable convolution.}
Mobilenet (\cite{MobileNet}), MobilenetV2 (\cite{MobileNetV2}), Shufflenet (\cite{ShuffleNet}), and ShufflenetV2 (\cite{ShuffleNetV2}) are the most representative approaches, and
many works (\cite{Xception,MobileNetV3,WuDZWSWTVJK19,WanDZHTXWYXCVG20}) including them use depthwise and pointwise convolutions as building blocks for designing convolution layers.
However, they do not utilize the knowledge of a trained CNN model, and provide heuristic methods without precise understanding of the relation between depthwise separable convolution and standard convolution.
Our proposed \method gives a thorough interpretation of depthwise and pointwise convolutions, and exploits them into model compression,
giving the best accuracies with regard to compression and computation.

\section{Conclusion}

We propose \method, an accurate and lightweight convolution method for CNN.
By interpreting existing convolution methods based on depthwise separable convolution using GEP operation,
\method and its general version rank-$k$ \method provide accurate and efficient compression on CNN.
We also propose \method-branch, a variant of \method integrated into a branch architecture of CNN for model compression.
Extensive experiments show that
\method and its variants give the best accuracy for a given number of parameter or computation,
outperforming other compression models based on depthwise separable convolution and tensor decompositions.
Compared to the standard convolution,
\method and \method-branch give up to 8.61$\times$ compression and 8.44$\times$ computation reduction while giving similar accuracy.
We also show that rank-$k$ \method provides even better accuracy often outperforming the standard convolution,
while using smaller numbers of parameters and computations.
Future works include extending \method for other deep learning models beyond CNN.

\bibliography{dmlab}
\bibliographystyle{iclr2021_conference}


\end{document}